\theoremstyle{plain}
\theoremstyle{definition}
\theoremstyle{remark}
\renewcommand{\vec}[1]{\mathbf{#1}}
\title{Accelerating Attention with Basis Decomposition}
\author{%
  Jialin Zhao \\
  Department of Computer Science\\
  Tsinghua University\\
  \texttt{jialin.zhao97@gmail.com} \\
}
\begin{document}

\definecolor{customcolor}{RGB}{211, 211, 211} % Light gray
\definecolor{MyDarkGreen}{rgb}{0.0, 0.5, 0.0}

\maketitle

\begin{abstract}
  Attention is a core operation in large language models (LLMs). We present BD Attention (\textbf{BDA}), a \textit{lossless algorithmic reformulation} of attention. BDA is enabled by a simple matrix identity from Basis Decomposition (\textbf{BD}), which restructures multi-head projections into a compact form while preserving exact outputs. Unlike I/O-aware system optimizations such as FlashAttention, BDA provides a mathematically guaranteed acceleration that is architecture-agnostic. On DeepSeek-V2-Lite (16B, FP16), BDA requires only \textbf{4s} of offline preparation \textbf{with no retraining required} and, on modern GPUs, achieves \textbf{34\% faster} key/value projections and \textbf{25\% smaller} weights, while increasing perplexity (PPL) by just \textbf{0.02\%} (FP16) or \textbf{0.0004\%} (FP32)—a negligible effect on model performance. These results position BDA as a theoretically exact method for lossless attention acceleration that is complementary to existing engineering-level optimizations. Our code is available at \url{https://anonymous.4open.science/r/Basis-decomp-57B8}.
\end{abstract}

\section{Introduction}
\label{sec:intro}

Attention \citep{vaswani2017attention} has emerged as the fundamental building block in large language models (LLMs) and vision-language models (VLMs), enabling them to scale effectively across diverse tasks. However, the cost of multi-head attention (MHA) in both computation and memory makes it a major efficiency bottleneck. Existing acceleration techniques can be broadly divided into two categories. On the one hand, \textit{lossless system-level optimizations}, such as FlashAttention~\citep{dao2022flashattention}, improve I/O efficiency by reordering memory access and fusing kernels, but their gains are hardware-specific and do not reduce the number of arithmetic operations or parameters. On the other hand, \textit{approximate algorithmic approaches}, including linear attention~\citep{katharopoulos2020transformers,choromanski2021rethinking}, sparse attention~\citep{child2019generating,beltagy2020longformer}, pruning~\citep{ma2023llmpruner,frantar2023sparsegpt}, and quantization~\citep{frantar2023optq,MLSYS2024_42a452cb}, reduce complexity or storage but inevitably trade off accuracy and often require retraining or calibration.

In this work, we take a different path and propose \textbf{BD Attention (BDA)}, a \textit{lossless algorithmic reformulation} of attention. BDA is enabled by a simple yet general matrix identity from \textbf{Basis Decomposition (BD)} to restructure MHA projections into a compact form, thereby reducing parameters and arithmetic operations while preserving exact outputs. This algorithmic perspective complements existing I/O-aware optimizations.

We validate BDA across multiple scenarios. On DeepSeek-V2-Lite (16B, FP16), BDA requires only \textbf{4s} of offline preparation with no retraining and achieves \textbf{34\% faster} key/value projections and \textbf{25\% smaller} weights on modern GPUs, with perplexity (PPL) change at the negligible level of \textbf{0.02\%}. Beyond inference, training experiments show that BDA achieves BLEU scores comparable to MHA without any hyperparameter adjustment. Furthermore, when applied on top of low-rank pruned models, BD reduces memory by \textbf{16.5\%} and improves throughput by \textbf{17.2\%}, demonstrating its compatibility with existing compression techniques. The magnitude of this speedup depends on the head-to-embedding dimension ratio, and is most pronounced in MLA-style architectures such as DeepSeek-V2/V3.

Overall, BDA establishes a new algorithmic foundation for lossless attention acceleration, uniting theoretical guarantees with practical speedups on real hardware.

We highlight two main contributions of this study:

\begin{itemize}
    \item \textbf{Theoretical foundation.} We propose \textbf{Basis Decomposition (BD)}, a general matrix decomposition that guarantees hardware-friendly reconstruction with probability 1 under assumptions that are naturally satisfied by neural network weight matrices (Theorem~\ref{thm:full-rank-random}).
    \item \textbf{Practical acceleration.} We apply BD to attention and low-rank layers,
    showing that it reduces parameters and arithmetic operations without degrading model quality.
    We demonstrate consistent inference acceleration with both a standard PyTorch implementation on CPU
    and a fused Triton kernel on GPU, achieving near-theoretical speedups in practice.

\end{itemize}

\begin{table}[t]
\centering
\caption{\textbf{Comparison of acceleration techniques for Transformers.}}
\label{tab:method_comp}
\begin{tabular}{lccccc}
\toprule
Method & Lossless & (Re)Train & CPU Speedup & GPU Speedup & \# Params \\
\midrule
Flash Attention & \textcolor{MyDarkGreen}{\ding{51}} & \ding{55}             & \ding{55}  & \textcolor{MyDarkGreen}{\ding{51}} & Same  \\
Linear / Sparse Attention& \ding{55}  & \textcolor{MyDarkGreen}{\ding{51}}            & \textcolor{MyDarkGreen}{\ding{51}} & \textcolor{MyDarkGreen}{\ding{51}} & Same  \\
Pruning / Quantization         & \ding{55}  & Partial  & \textcolor{MyDarkGreen}{\ding{51}} & \textcolor{MyDarkGreen}{\ding{51}} & Reduce \\
\cellcolor{customcolor}BD Attention     & \cellcolor{customcolor}\textcolor{MyDarkGreen}{\ding{51}} & \cellcolor{customcolor}\ding{55}             & \cellcolor{customcolor}\textcolor{MyDarkGreen}{\ding{51}} & \cellcolor{customcolor}\textcolor{MyDarkGreen}{\ding{51}} & \cellcolor{customcolor}Reduce \\
\bottomrule
\end{tabular}
\vspace{-1em}
\end{table}

\section{Related Work}
\label{sec:related}

Research on accelerating attention broadly falls into two categories: (i) \textit{lossless} system-level optimizations that preserve exact outputs, and (ii) \textit{approximate} algorithms that trade accuracy for efficiency. Table~\ref{tab:method_comp} summarizes representative approaches.

\paragraph{Lossless methods.}
FlashAttention and its successors~\citep{dao2022flashattention,dao2023flashattention,NEURIPS2024_7ede97c3,kwon2023efficient} reduce memory traffic by fusing attention kernels and tiling reads/writes to on-chip SRAM, achieving substantial GPU speedups without altering outputs. These approaches are \textit{lossless} but depend heavily on GPU memory hierarchies and I/O architectures. Crucially, FlashAttention and BDA target disjoint steps of attention: FlashAttention optimizes the softmax computation $\mathrm{softmax}(\vec{Q}\vec{K}^\top)\vec{V}$ (Algorithm~\ref{alg:mha_bd_inference}, step 5), whereas BDA reduces FLOPs in the projection step $\vec{X}\!\to\!\vec{K},\vec{V}$ (Algorithm~\ref{alg:mha_bd_inference}, steps 2--3). The two are therefore complementary and can be combined.

\paragraph{Approximate methods.}
In contrast, many lines of work accelerate attention or reduce memory by introducing approximations.

\begin{itemize}
  \item \textbf{Linear-attention} replaces the quadratic softmax kernel with linear-time kernel approximations~\citep{wang2020linformer,katharopoulos2020transformers,choromanski2021rethinking,schlag2021linear,chen2021skyformer}.

  \item \textbf{Sparse-attention} imposes predefined or learned sparsity patterns, e.g., Sparse Transformers~\citep{child2019generating}, Longformer~\citep{beltagy2020longformer}, Reformer~\citep{Kitaev2020Reformer}, and sketching-based sampling methods~\citep{chen2022sketching}.

  \item \textbf{Pruning} removes parameters based on importance criteria.
  \begin{itemize}
    \item \textbf{Unstructured pruning} targets individual weights~\citep{frantar2023sparsegpt,sun2024a,zhang2024plugandplay}, effective in reducing parameters but hard to accelerate on GPUs.
    \item \textbf{Semi-structured pruning} (e.g., $N\!:\!M$ patterns)~\citep{mishra2021accelerating} enables GPU acceleration on NVIDIA’s Ampere GPUs but restricts density flexibility.
    \item  \textbf{Structured pruning} removes entire channels/heads~\citep{ma2023llmpruner,ouderaa2024the,ashkboos2024slicegpt}, more deployment-friendly but often with higher accuracy loss.
    \item \textbf{Low-rank pruning} approximates weight matrices by decomposing them into lower-rank components~\citep{yuan2023asvd,wang2024svd,zhao2025pivoting}, reducing both parameters and compute. While hardware-friendly, it introduces approximation error unless the true rank is low.
  \end{itemize}

  \item \textbf{Quantization} compresses parameters/activations into low-precision formats~\citep{frantar2023optq,MLSYS2024_42a452cb,xiao2023smoothquant,lin2024duquant}, achieving strong memory savings but inevitably introducing errors at low bit-widths.
\end{itemize}

These methods can deliver large efficiency gains but are inherently \textit{not lossless}, and usually require careful finetuning to mitigate quality drops.

\section{Method}
\label{sec:method}

\begin{figure}[t]
		\centering

  \includegraphics[width=1\linewidth]{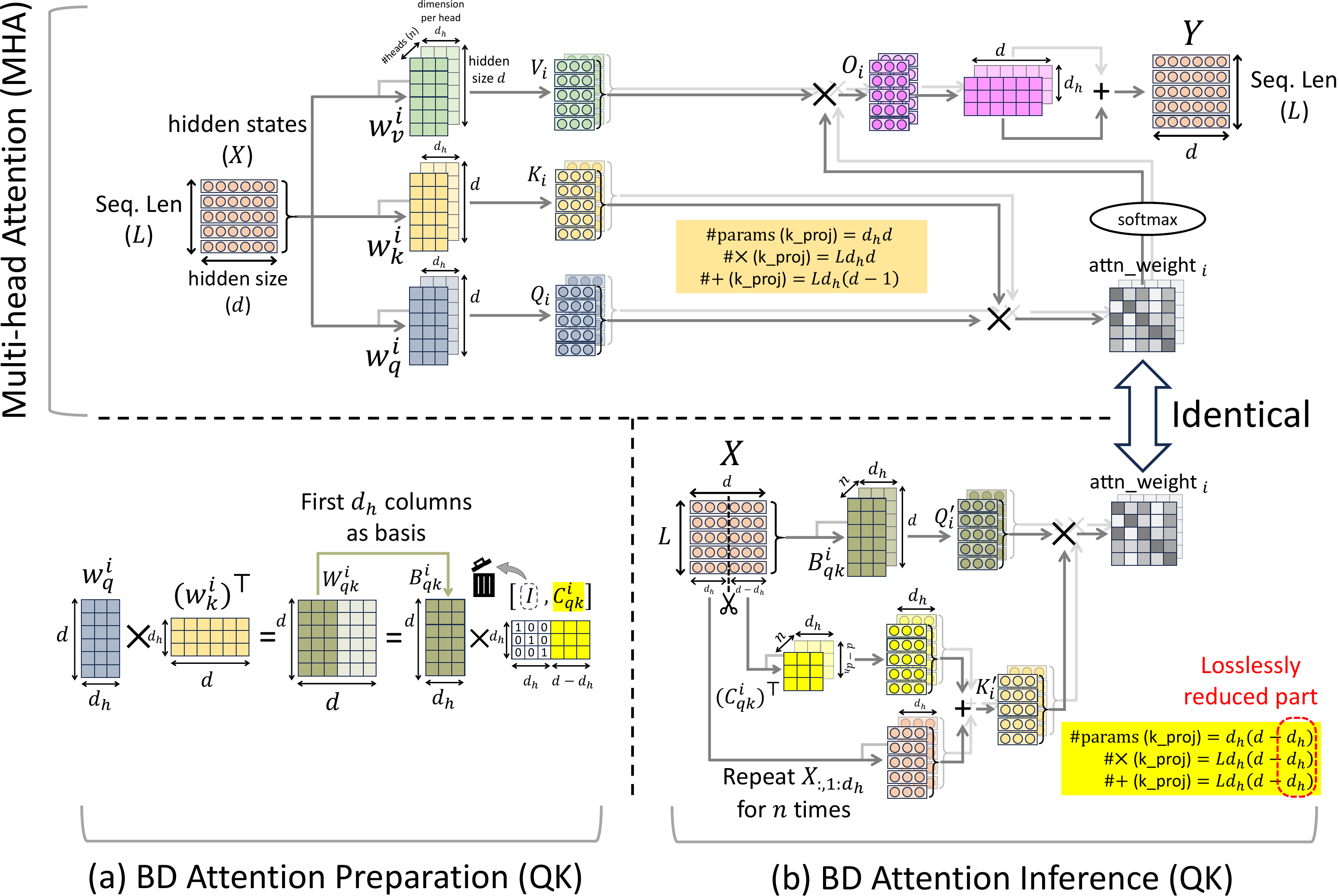}

  \caption{Illustration of BD Attention (\textbf{BDA}) using the QK projection as an example (VO is analogous).
BDA consists of two stages:
(a) \textbf{BD Attention Preparation} (Algorithm \ref{alg:mha_bd_prepare}), performed offline \textit{once} during model deployment, where the projection matrices are transformed via Basis Decomposition;
(b) \textbf{BD Attention Inference} (Algorithm \ref{alg:mha_bd_inference}) saves $d_h / d$ in both parameters and computation, while preserving exact outputs.}

		\label{fig:bd_illustration}

    \vspace{-1em}

\end{figure}

\subsection{Basis Decomposition}

Given two rectangular matrices \(\Vec{U} \in \mathbb{R}^{m \times r}\) and \(\Vec{V}^{\top} \in \mathbb{R}^{r \times n}\), where \(r < \min(m, n)\), their product matrix \(\vec{W} = \Vec{U}\Vec{V}^{\top}\) has rank at most \(r\). Our goal is to develop a general decomposition method that expresses \(\vec{W}\) as \(\vec{W} = f(\vec{M}_1, \ldots, \vec{M}_k)\), where the function \(f(\vec{M}_1, \ldots, \vec{M}_k)\) has lower computational cost than the original multiplication \(\Vec{U}\Vec{V}^{\top}\).

We assume \(\vec{W}\) has rank \(r\) without loss of generality. In this case, there exist exactly \(r\) linearly independent rows in \(\vec{W}\). We define a matrix \(\vec{B} = [\vec{b}_1, \ldots, \vec{b}_r]^{\top} \in \mathbb{R}^{r \times n}\), where each \(\vec{b}_j\) is one of these linearly independent rows. Thus, \(\vec{B}\) forms a basis of the row space of \(\vec{W}\). For any row \(\vec{w}_i \in \vec{W} \setminus \vec{B}\), where \(\vec{w}_i \in \mathbb{R}^{n}\), it can be written as a linear combination of the basis vectors in \(\vec{B}\):

\begin{equation}
\label{equ:linear_combination}
\vec{w}_i = \sum_{j=1}^r c_{ij} \vec{b}_j,
\end{equation}

which follows directly from the fact that the row space of \(\vec{W}\) is spanned by the \(r\) basis vectors in \(\vec{B}\). Collecting all such coefficients \(c_{ij}\) forms a coefficient matrix \(\vec{C} \in \mathbb{R}^{(m - r) \times r}\), where each row contains the weights for reconstructing a non-basis row of \(\vec{W}\) as a linear combination of the basis vectors in \(\vec{B}\). With \(\vec{B}\) and \(\vec{C}\), the original matrix \(\vec{W}\) can be fully reconstructed.

This forms an alternative representation of the low-rank matrix \(\vec{W}\), distinct from the traditional low-rank multiplication \(\vec{U} \vec{V}^{\top}\). We refer to this representation as \textbf{Basis Decomposition (BD)}, encompassing both the decomposition \(\vec{W} \rightarrow (\vec{B}, \vec{C})\) and its reconstruction \((\vec{B}, \vec{C}) \rightarrow \vec{W}\). We use the term \textit{BD} for general reference, and specify \textit{BD decomposition / reconstruction} only when emphasizing the decomposition/reconstruction process.

\paragraph{Memory cost of BD.} BD stores two matrices: the basis matrix \(\vec{B} \in \mathbb{R}^{r \times n}\) and the coefficient matrix \(\vec{C} \in \mathbb{R}^{(m - r) \times r}\). The total memory cost is \(r(m + n - r)\), which is strictly smaller than the full matrix size \(mn\) and low rank matrices size \(r(m+n)\) for any \(r < \min(m, n)\). In contrast, the traditional low-rank representation \(\vec{W} = \vec{U} \vec{V}^{\top}\) requires \(r(m + n)\) parameters and is only more compact than the full matrix when \(r < \frac{mn}{m + n}\).

\paragraph{Computational cost of BD reconstruction.} Reconstructing \(\vec{W}\) from BD involves computing \(\vec{C} \vec{B}\) and inserting the \(r\) basis rows. This requires \(2r(m - r)n\) floating-point operations (FLOPs). In comparison, the traditional reconstruction \(\vec{U} \vec{V}^{\top}\) costs \(2rmn\) FLOPs. Therefore, BD reconstruction is computationally more efficient for any \(r < \min(m, n)\).

\subsection{Selection of Basis}

Although \(\vec{W} = \vec{U}\vec{V}^\top\) has rank exactly \(r\) by construction, this does not immediately imply that an \emph{arbitrary} \(r\)-row subset of \(\vec{W}\) is a valid basis: the chosen rows must themselves be linearly independent. Equivalently, the selected \(r\)-row submatrix \(\vec{B}\in\mathbb{R}^{r\times n}\) must contain at least one invertible \(r\times r\) sub-block. The following theorem shows that, for weights drawn from a continuous distribution, this holds with probability 1 for \emph{any} subset of \(r\) rows.

\begin{restatable}[Almost Sure Full Rank of Random Matrices]{theorem}{fullRankRandomTheorem}
\label{thm:full-rank-random}
Let $\vec{W}$ be an $r \times r$ real random matrix. Suppose the entries of $\vec{W}$
are drawn from a probability measure $\mu$ on $\mathbb{R}^{r^2}$
that is absolutely continuous with respect to the Lebesgue measure $\lambda$.
Then $W$ has full rank (\(\mathrm{rank}(W)=r\)) with probability 1.
\end{restatable}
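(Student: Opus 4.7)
The plan is to reduce the rank condition to the nonvanishing of a single polynomial and then invoke absolute continuity. Identifying $\vec{W}$ with a point in $\mathbb{R}^{r^2}$, the set of rank-deficient matrices is precisely
\[
Z \;=\; \{\vec{W}\in\mathbb{R}^{r^2} : \det(\vec{W})=0\},
\]
which is the zero set of the polynomial $p(\vec{W})=\det(\vec{W})$ in the $r^2$ matrix entries. Since $p$ is not identically zero (e.g., $\det(\vec{I}_r)=1$), the task reduces to showing that $Z$ has Lebesgue measure zero and then transferring this to $\mu$ via absolute continuity.

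The main step is the standard fact that the zero set of any non-zero polynomial $p:\mathbb{R}^N\to\mathbb{R}$ has Lebesgue measure zero, which I would prove by induction on $N$. For $N=1$ a non-zero univariate polynomial has only finitely many roots, hence a null zero set. For the inductive step, write
\[
p(x_1,\ldots,x_N) \;=\; \sum_{k=0}^{d} q_k(x_1,\ldots,x_{N-1})\,x_N^k
\]
and let $k^\star$ be the largest index with $q_{k^\star}\not\equiv 0$. By the inductive hypothesis, the set $E=\{q_{k^\star}=0\}\subset\mathbb{R}^{N-1}$ has Lebesgue measure zero. For any fixed $(x_1,\ldots,x_{N-1})\notin E$, the restriction of $p$ to the last coordinate is a non-zero polynomial in $x_N$ of degree $k^\star$, so it vanishes on a finite, hence null, subset of $\mathbb{R}$. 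Fubini's theorem then gives $\lambda(\{p=0\})=0$.

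Applying this with $p=\det$ on $\mathbb{R}^{r^2}$ yields $\lambda(Z)=0$, and the hypothesis $\mu\ll\lambda$ immediately forces $\mu(Z)=0$, i.e., $\Pr(\det(\vec{W})=0)=0$, so $\mathrm{rank}(\vec{W})=r$ almost surely. The only subtle point I expect is the inductive measure-theoretic step: the argument hinges on choosing the highest non-vanishing coefficient $q_{k^\star}$ so that the one-variable restriction is guaranteed non-degenerate outside a null set, and on recognizing $q_{k^\star}$ itself as a polynomial in $N-1$ variables so the inductive hypothesis applies to it. Everything else is a clean translation between the algebraic (determinant) and measure-theoretic (Lebesgue null, then $\mu$-null) viewpoints.
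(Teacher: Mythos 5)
Your proposal is correct and follows essentially the same route as the paper: identify rank deficiency with the vanishing of the nonzero polynomial $\det$, observe that its zero set is Lebesgue-null, and transfer to $\mu$ via absolute continuity. The only difference is that you supply a full inductive Fubini argument for the measure-zero lemma, which the paper simply cites as a standard fact about real algebraic varieties.
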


Theorem~\ref{thm:full-rank-random} (proof in Appendix \ref{sec:proof_full_rank_random}) states that if a matrix \(\vec{M} \in \mathbb{R}^{r \times r}\) has entries drawn from a distribution that is absolutely continuous with respect to the Lebesgue measure, then \(\vec{M}\) is full rank with probability 1. In practical terms, this applies to any matrix whose entries follow a continuous distribution, including weights perturbed by any non-degenerate continuous noise.

Therefore, when \(\vec{W}\) is a product of weight matrices \(\Vec{U}\Vec{V}^\top\) whose entries follow a continuous distribution, any \(r\) rows of \(\vec{W}\) form a submatrix \(\vec{B}\in\mathbb{R}^{r\times n}\) whose entries are also continuous. By Theorem~\ref{thm:full-rank-random}, every \(r\times r\) sub-block of \(\vec{B}\) (obtained by choosing any \(r\) columns) is invertible with probability 1, so the \(r\) rows of \(\vec{B}\) are linearly independent. We thus \textbf{freely choose} any \(r\) rows (or symmetrically, any \(r\) columns) as a valid basis for BD reconstruction, without explicit rank analysis or pivoting. This situation is common in practice—for example, when \(\Vec{U}\) and \(\Vec{V}\) are weight matrices learned via stochastic gradient descent (SGD), whose stochastic dynamics make the resulting weights samples from a continuous distribution rather than deterministic values~\citep{mandt2017stochastic, zhou2020towards}.

Notably, PIFA~\citep{zhao2025pivoting} can be regarded as a special case of BD: it selects basis rows via QR factorization with column pivoting, approximating the most numerically independent directions~\citep{businger1971linear}. While this is useful in rare rank-deficient or ill-conditioned settings, such guarantees are unnecessary in typical neural networks where the stochasticity of training ensures full rank with probability 1. Hence, BD offers a more flexible and efficient basis selection strategy.

In particular, we find that choosing contiguous rows/columns, such as the first‑\(r\) or last‑\(r\) rows/columns, offers significant efficiency advantages. It minimizes the I/O overhead during reconstruction by avoiding scattered memory writes/reads on modern hardware such as GPUs. We adopt this strategy as the standard basis selection method for BD in practice.

Let $\vec{I}\in\mathbb{R}^{r\times r}$ denote the identity matrix. The following identities hold for the four types of BD:
\begin{equation}
\label{equ:bd_identity}
\renewcommand{\arraystretch}{1.1}
\begin{array}{l}
\text{row-first: }
\vec{W} \equiv \begin{bmatrix}\vec{I} \\ \vec{C}\end{bmatrix} \vec{B},
\quad
\text{row-last: }
\vec{W} \equiv \begin{bmatrix}\vec{C} \\ \vec{I}\end{bmatrix} \vec{B},\\[4pt]
\text{col-first: }
\vec{W} \equiv \vec{B}[\vec{I}, \vec{C}],
\quad
\text{col-last: }
\vec{W} \equiv \vec{B}[\vec{C}, \vec{I}].
\end{array}
\end{equation}

While the theoretical reconstruction is exact, numerical residuals may arise in practice due to finite-precision arithmetic or ill-conditioned basis matrices. To mitigate this, we compare the reconstruction errors from the first- and last-\(r\) basis candidates and retain the one with the smaller Frobenius norm residual. The full procedure for \textit{row-based} Basis Decomposition is summarized in Algorithm~\ref{alg:bd} (BD decomposition) and Algorithm~\ref{alg:bd_reconstruct} (BD reconstruction), where a subset of rows is selected as the basis. The \textit{column-based} variant can be formulated analogously and is omitted here. An offset-search variant of Residual-min that further improves numerical stability is described in Appendix~\ref{sec:offset_search}.

By losslessly replacing standard low-rank matrix multiplication with a more compact and computationally efficient alternative, BD is broadly applicable to scenarios involving low-rank multiplications. This includes applications such as neural network inference and data compression.

% \subsection{Basis Decomposition for Artificial Neural Networks}

% Low-rank matrix multiplication has been widely used in modern neural network architectures to reduce parameter count and computational cost, particularly in large models such as Transformers \citep{vaswani2017attention}. Since BD operates directly on low-rank matrix products, it can be seamlessly applied to \textbf{losslessly} accelerate common neural network components where such products appear.

\subsection{BD for Linear Layer}
\label{sec:bd_linear}
The linear layer is the most common component in neural networks. Given an input vector \(\vec{x} \in \mathbb{R}^{d_{\text{in}}}\), a standard linear layer computes the output \(\vec{y} \in \mathbb{R}^{d_{\text{out}}}\) using a weight matrix \(\overline{\vec{W}} \in \mathbb{R}^{d_{\text{in}} \times d_{\text{out}}}\):
\begin{equation}
  \vec{y} = \vec{x} \overline{\vec{W}}.
\end{equation}
To reduce parameter count and computational cost, many recent works adopt low-rank approximations of the weight matrix. A common approach is to factorize \(\overline{\vec{W}}\) as \(\overline{\vec{W}} \approx \vec{U} \vec{V}^{\top}\), where \(\vec{U} \in \mathbb{R}^{d_{\text{in}} \times r}\) and \(\vec{V} \in \mathbb{R}^{d_{\text{out}} \times r}\) with \(r < \min(d_{\text{in}}, d_{\text{out}})\). The resulting low-rank layer becomes:
\begin{equation}
\vec{y} = (\vec{x} \vec{U}) \vec{V}^{\top},
\end{equation}
which reduces both the number of parameters from \(d_{\text{in}} d_{\text{out}}\) to \(r (d_{\text{in}} + d_{\text{out}})\).

Such low-rank structures appear in various domains in deep learning: (1)~\textbf{low-rank pruning}, where pretrained weight matrices are compressed post hoc via SVD-like approximations~\citep{hsu2022language,yuan2023asvd,wang2024svd,zhao2025pivoting,jaiswal2024WeLore,saha2024compressing,kaushal2023lord,sharma2023truth,qinsidobi,liu2025eorafinetuningfreecompensationcompressed,sakr2024espace,ren-zhu-2024-low,lin2024modegptmodulardecompositionlarge,hajimolahoseini2022strategies}; (2)~\textbf{low-rank training}, where the weight matrices are parameterized as low-rank products throughout training~\citep{khodak2021initialization,schotthofer2022low,kamalakara2022exploring,zhao2023inrank,savostianova2023robust}; (3)~\textbf{low-rank + sparse hybridization}, which combines sparsity and low-rank approximations for improved performance~\citep{pmlr-v202-li23ap,han2024sltrain,zhang2025oats}; and (4)~\textbf{LoRA-style fine-tuning}~\citep{hu2022lora,zhang2023adaptive,lialin2024relora,meng2024pissa,dora,zhang2023increloraincrementalparameterallocation}, where a low-rank adaptation is injected into frozen models for efficient parameter updates.

Since Basis Decomposition (BD) operates directly on the product \(\vec{U} \vec{V}^{\top}\), it can be seamlessly applied to all these cases in a lossless and hardware-efficient manner.

To replace the low-rank linear layer with a BD layer, we decompose the weight product \(\vec{W} = \vec{U} \vec{V}^\top\) using column-based BD. Let \(\vec{B} \in \mathbb{R}^{d_{\text{in}} \times r}\) be the first-\(r\) column and \(\vec{C} \in \mathbb{R}^{r \times (d_{\text{out}} - r)}\) the coefficient matrix. The BD layer computes the output in two steps:
\begin{equation}
\vec{h} \gets \vec{x} \vec{B}, \quad \vec{y} \gets [ \vec{h}, \vec{h} \vec{C}].
\end{equation}
The last-\(r\) version is similar. For any \(r < \min(d_{\text{in}}, d_{\text{out}})\), BD achieves strictly lower FLOPs and memory cost than the original low-rank layer, reduced by \(\frac{r}{d_{\text{in}} + d_{\text{out}}}\) relative to the original.

\begin{figure}[t]
  \centering
  % ----------- 左侧 ------------
% \textbf{Inference pseudocode comparison between standard MHA and BD Attention. Red operations highlight the structural difference.}\par\medskip

\begin{minipage}[t]{0.485\textwidth}
    \begin{algorithm}[H]       % H: 禁止自动漂浮，允许嵌套
      \caption{MHA Inference}
      \label{alg:mha_inference}
       \small
      \begin{algorithmic}[1]
        \INPUT Weight \(\vec{W}_q \in \mathbb{R}^{d \times n d_h}\), \(\vec{W}_k \in \mathbb{R}^{d \times n d_h}\), \(\vec{W}_v \in \mathbb{R}^{d \times n d_h}\), \(\vec{W}_o \in \mathbb{R}^{n d_h \times d}\); Input \(\vec{X} \in \mathbb{R}^{L \times d}\)
        \STATE \(\vec{Q} \gets \vec{X} \vec{W}_q\)
        \STATE \(\vec{K} \gets \vec{X} \vec{W}_k\)
        \STATE \(\vec{V} \gets \vec{X} \vec{W}_v\)
        \STATE \([\vec{Q}_1, \dots, \vec{Q}_n] \gets \vec{Q}, \; [\vec{K}_1, \dots, \vec{K}_n] \gets \vec{K}, \; [\vec{V}_1, \dots, \vec{V}_n] \gets \vec{V}\)
        \STATE \(\vec{O}_i \gets \text{softmax}(\frac{\vec{Q}_i \vec{K}_i^\top}{\sqrt{d_h}})\vec{V}_i\)
        \STATE \(\vec{Y} \gets [\vec{O}_1, \dots, \vec{O}_n] \vec{W}_o\)
        \OUTPUT \(\vec{Y}\)
      \end{algorithmic}
    \end{algorithm}
  \end{minipage}
  \hfill
  % ----------- 右侧 ------------
  \begin{minipage}[t]{0.485\linewidth}
    \begin{algorithm}[H]
      \caption{BD Attention Inference}
      \label{alg:mha_bd_inference}
      \small
      \begin{algorithmic}[1]
        \INPUT Weight \(\vec{B}_{qk} \in \mathbb{R}^{d \times n d_h}\), \(\vec{C}_{qk} \in \mathbb{R}^{\textcolor{red}{(d - d_h)} \times n d_h}\), \(\vec{C}_{vo} \in \mathbb{R}^{\textcolor{red}{(d - d_h)} \times n d_h}\), \(\vec{B}_{vo} \in \mathbb{R}^{n d_h \times d}\); Input \(\vec{X} \in \mathbb{R}^{L \times d}\)
        \STATE \(\vec{Q}^{\prime} \gets \vec{X} \vec{B}_{qk}\)
        \STATE \({\vec{K}^{\prime}} \gets \textcolor{red}{{[\vec{X}_{:,\,1:d_h}]}^{\times n} + \vec{X}_{:,\,d_h:d}\vec{C}_{qk}}\)
        \STATE \(\vec{V}^{\prime} \gets \textcolor{red}{{[\vec{X}_{:,\,1:d_h}]}^{\times n} + \vec{X}_{:,\,d_h:d}\vec{C}_{vo}}\)
        \STATE \([\vec{Q}_1^{\prime}, \dots, \vec{Q}_n^{\prime}] \gets \vec{Q}^{\prime}, \; [\vec{K}_1^{\prime}, \dots, \vec{K}_n^{\prime}] \gets \vec{K}^{\prime}, \; [\vec{V}_1^{\prime}, \dots, \vec{V}_n^{\prime}] \gets \vec{V}^{\prime}\)
        \STATE \(\vec{O}_i^{\prime} \gets \text{softmax}(\frac{\vec{Q}_i^{\prime} {\vec{K}_i^{\prime}}^\top}{\sqrt{d_h}})\vec{V}_i^{\prime}\)
        \STATE \(\vec{Y} \gets [\vec{O}_1^{\prime}, \dots, \vec{O}_n^{\prime}] \vec{B}_{vo}\)
        \OUTPUT \(\vec{Y}\)
      \end{algorithmic}
    \end{algorithm}
  \end{minipage}

\caption{Inference pseudocode comparison between standard MHA and BD Attention. \textcolor{red}{Red operations} highlight the structural difference.}
\label{fig:alg_compare}
\vspace{-1em}
\end{figure}

\subsection{BD for Multi-Head Attention}
Multi-head attention (MHA) is a central component in Transformer-based architectures, widely adopted in large language models (LLMs)~\citep{vaswani2017attention,radford2018improving,brown2020languagemodelsfewshotlearners,touvron2023llama} and vision-language models (VLMs)~\citep{dosovitskiy2021an,radford2021learning,li2022blip}. The comparison between BD Attention and MHA is illustrated in Figure \ref{fig:bd_illustration}.

We begin by reviewing the standard multi-head attention (MHA) mechanism. Let $d$ be the input (embedding) dimension, $n$ be the number of attention heads, $d_h$ be the dimension per head, $L$ be the input sequence length, and $\vec{X} \in \mathbb{R}^{L \times d}$ be the attention input. MHA produces queries, keys and values ($\vec{Q},\vec{K},\vec{V} \in \mathbb{R}^{L \times n d_h}$) by three projection matrices ($\vec{W}_q,\vec{W}_k,\vec{W}_v \in \mathbb{R}^{d \times n d_h}$):
\begin{equation}
\vec{Q} = \vec{X} \vec{W}_q, \quad \vec{K} = \vec{X} \vec{W}_k, \quad \vec{V} = \vec{X} \vec{W}_v
\end{equation}
\(\vec{Q},\vec{K},\vec{V}\) are divided into $n$ heads for MHA:
\begin{gather}
[\vec{Q}_1, \dots, \vec{Q}_n] = \vec{Q},\quad [\vec{K}_1, \dots, \vec{K}_n] = \vec{K},\quad [\vec{V}_1, \dots, \vec{V}_n] = \vec{V}, \\
\vec{O}_i = \mathrm{softmax}\!\left(\frac{\vec{Q}_i \vec{K}_i^\top}{\sqrt{d_h}}\right)\vec{V}_i, \quad \vec{Y} = [\vec{O}_1, \dots, \vec{O}_n] \vec{W}_o.
\end{gather}
where $\vec{Q}_i, \vec{K}_i, \vec{V}_i \in \mathbb{R}^{L \times d_h}$ represent the respective query, key, and value vectors for the $i$-th head, and \(\vec{W}_o \in \mathbb{R}^{n d_h \times d}\) represents the output projection matrix.

We reformulate multi-head attention (MHA).
\begin{equation}
  \begin{aligned}
    \vec{Y} &= \sum_{i=1}^n \vec{O}_i \vec{W}_o^i = \sum_{i=1}^n \text{softmax}(\frac{\vec{Q}_i \vec{K}_i^\top}{\sqrt{d_h}})\vec{V}_i \vec{W}_o^i \label{equ:head_simple} \\
&= \sum_{i=1}^n \text{softmax}\left(\frac{\vec{X} (\vec{W}_q^i {(\vec{W}_{k}^{\,i})}^{\!\top}) \vec{X}^\top}{\sqrt{d_h}}\right)\vec{X} (\vec{W}_v^i \vec{W}_o^i) \\
\vec{W}_q &\rightarrow [\vec{W}_q^1, \dots, \vec{W}_q^n], \quad \vec{W}_k \rightarrow [\vec{W}_k^1, \dots, \vec{W}_k^n], \\
    \vec{W}_v &\rightarrow [\vec{W}_v^1, \dots, \vec{W}_v^n], \quad
    \vec{W}_o \rightarrow \begin{bmatrix}
    \vec{W}_o^1 \\
    \vdots \\
    \vec{W}_o^n
    \end{bmatrix}
  \end{aligned}
\end{equation}
where $\vec{W}_q^i, \vec{W}_k^i, \vec{W}_v^i \in \mathbb{R}^{d \times d_h}$ denote the $i$-th \emph{vertical slice} of the corresponding weight matrices, and $\vec{W}_o^i \in \mathbb{R}^{d_h \times d}$ denotes the $i$-th \emph{horizontal slice} of $\vec{W}_o$. Both \((\vec{W}_q^i {\vec{W}_k^i}^\top)\) and \((\vec{W}_v^i \vec{W}_o^i) \) are a matrix with shape \(d \times d_h\) multiply a matrix with shape \(d_h \times d\). As \(d_h < d\), this reveals a key insight:

\begin{tcolorbox}[colback=gray!10, colframe=gray!50, sharp corners, boxrule=0.5mm, width=\columnwidth]
\textbf{\textit{Key Insight}}: Each head’s QK and VO computation are inherently low-rank matrix multiplications.
\end{tcolorbox}

We can apply Basis Decomposition (BD) \textbf{offline during model preparation} to compress these projection weight. Taking QK as an example (VO in Appendix \ref{sec:vo_mha}), we decompose the weight product \((\vec{W}_q^i {\vec{W}_k^i}^\top)\) using column-based BD. Let \(\vec{B}_{qk}^i \in \mathbb{R}^{d \times d_h}\) be the first-$r$ columns (last-$r$ is similar) and \(\vec{C}_{qk}^i \in \mathbb{R}^{d_h \times (d - d_h)}\) the coefficient matrix, we can convert the attention score into expression of BD:
\begin{equation}
\label{equ:qk_bd_single}
\begin{aligned}
    \text{attn\_score}_i &= \vec{Q}_i \vec{K}_i^\top = \vec{X} (\vec{W}_q^i {(\vec{W}_k^i)}^\top) \vec{X}^\top = \vec{X} (\vec{B}_{qk}^i [\vec{I}, \vec{C}_{qk}^i]) \vec{X}^\top \\
    &= (\vec{X} \vec{B}_{qk}^i) ([\vec{I}, \vec{C}_{qk}^i] \vec{X}^\top) = (\vec{X} \vec{B}_{qk}^i) (\vec{X}_{:,\,1:d_h}^\top + \vec{C}_{qk}^i\vec{X}_{:,\,d_h:d}^\top) \\
    &= \vec{Q}_i^{\prime} {\vec{K}_i^{\prime}}^\top \\
\vec{Q}_i^{\prime} &\leftarrow \vec{X} \vec{B}_{qk}^i, \quad {\vec{K}_i^{\prime}}^\top \leftarrow \vec{X}_{:,\,1:d_h}^\top + \vec{C}_{qk}^i\vec{X}_{:,\,d_h:d}^\top
\end{aligned}
\end{equation}
where \(\vec{X} \rightarrow \big[\, \vec{X}_{:,\,1:d_h},\vec{X}_{:,\,d_h:d} \,\big]\) partitions \(\vec{X}\) into its first $d_h$ and the remaining $d - d_h$ columns. By aligning all head's BD to either first-$r$ or last-$r$, we avoid calculating each head's Q and K projection separately, thereby reducing I/O overhead. The choice between first-\(r\) and last-\(r\) columns is determined by comparing the average residuals across all heads (Algorithm~\ref{alg:mha_bd_prepare}). This alignment allows Equation~\ref{equ:qk_bd_single}, which computes QK for a single head, to be reformulated to compute QK for all heads simultaneously:
\begin{equation}
\label{equ:qk_bd_merge}
\vec{Q}^{\prime} \leftarrow \vec{X} \vec{B}_{qk}, \quad \vec{K}^{\prime} \leftarrow {[\vec{X}_{:,\,1:d_h}]}^{\times n}
                  + \vec{X}_{:,\,d_h:d}\vec{C}_{qk}.
\end{equation}
where
\begin{equation}
\label{equ:qk_bd_merge_where}
\begin{aligned}
\vec{B}_{qk} \leftarrow [\vec{B}_{qk}^1, \dots, \vec{B}_{qk}^n], \quad \vec{C}_{qk} \leftarrow [{(\vec{C}_{qk}^1)}^\top, \dots, {(\vec{C}_{qk}^n)}^\top].
\end{aligned}
\end{equation}
The operator ${[\vec{X}_{:,\,1:d_h}]}^{\times n}$ denotes repeating the matrix
$n$ times along the second dimension.
The value-output (VO) projection can be processed analogously using the
row-based BD formulation (Appendix~\ref{sec:vo_mha}).

We compare the original MHA inference (Algorithm~\ref{alg:mha_inference})
with the BD-based MHA inference (Algorithm~\ref{alg:mha_bd_inference}).
The only differences (highlighted in \textcolor{red}{red}) lie in the
computation of keys $\vec{K}$ and values $\vec{V}$, where BD replaces the
original projections with smaller matrix multiplications.
A discussion of the interaction between positional embeddings and BD is
provided in Appendix~\ref{sec:pos_enc}.

\paragraph{Preservation of query–key similarity.}
The transformed projections $\vec{Q}'$ and $\vec{K}'$ satisfy $\vec{Q}'_i {\vec{K}'_i}^\top = \vec{Q}_i \vec{K}_i^\top$, which means that every pairwise inner product between queries and keys is exactly preserved.
$\vec{Q}'_i$ and $\vec{K}'_i$ can be regarded as an \emph{inner-product isomorphic} representation of the original $\vec{Q}_i,\vec{K}_i$ in a $d_h$-dimensional space.
We therefore still denote them as queries and keys ($\vec{Q}'_i, \vec{K}'_i$) to emphasize that they still maintain the essential property of attention: query–key similarity.
Advanced compression methods relying on query–key similarity, such as KV-cache compression, remain fully compatible with BDA.
Since the inner products are exactly preserved, these methods can be seamlessly integrated with BDA, enabling it to serve as a general and complementary acceleration framework.

\section{Experiments}
\label{sec:exp}

We evaluate BD Attention (BDA) from three perspectives: inference accuracy and efficiency, training evaluation, and integration with low-rank pruning.

\begin{figure}[t]
  \vspace{-1em}
    \centering
    \subfloat[\label{fig:bd_end2end_error} Perplexity increase]{
        \includegraphics[width=0.45\linewidth]{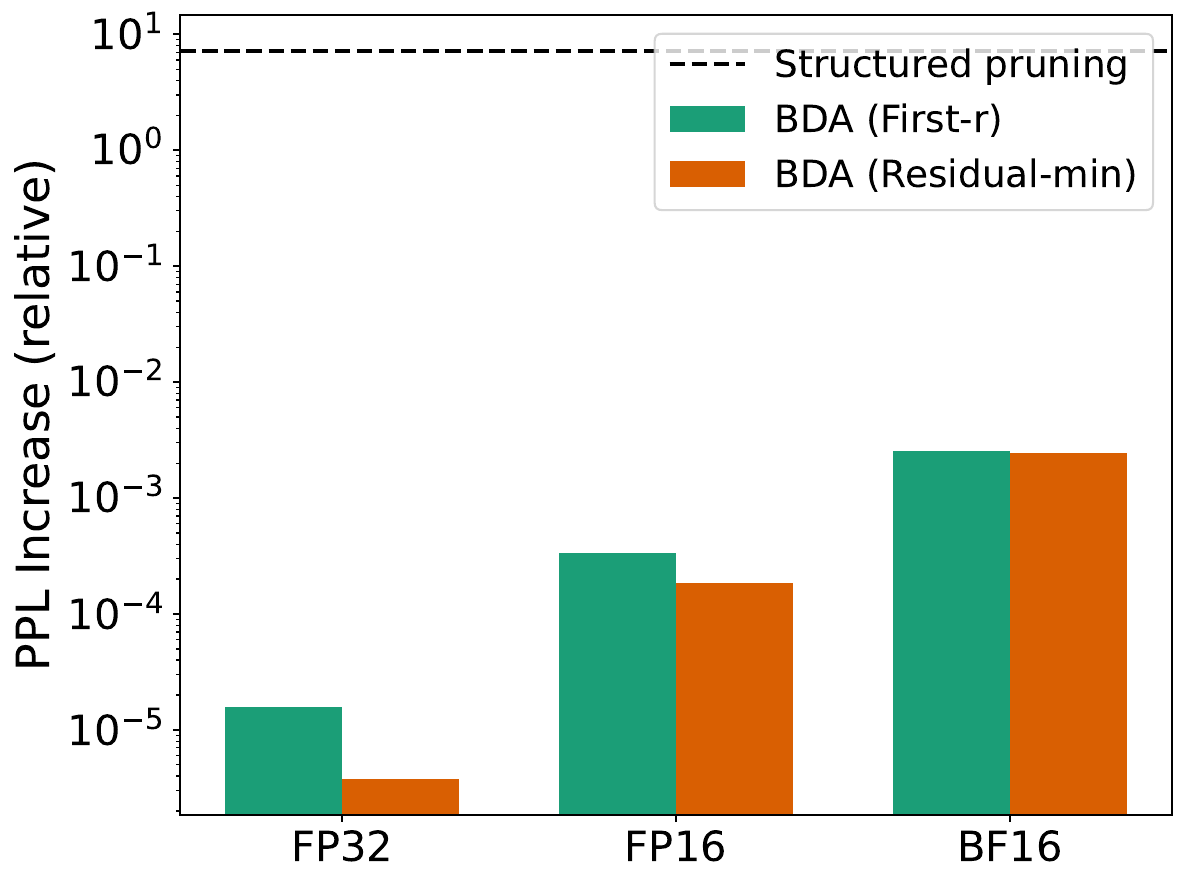}}
    \hfill
    \subfloat[\label{fig:bd_speedup} Efficiency]{
        \includegraphics[width=0.505\linewidth]{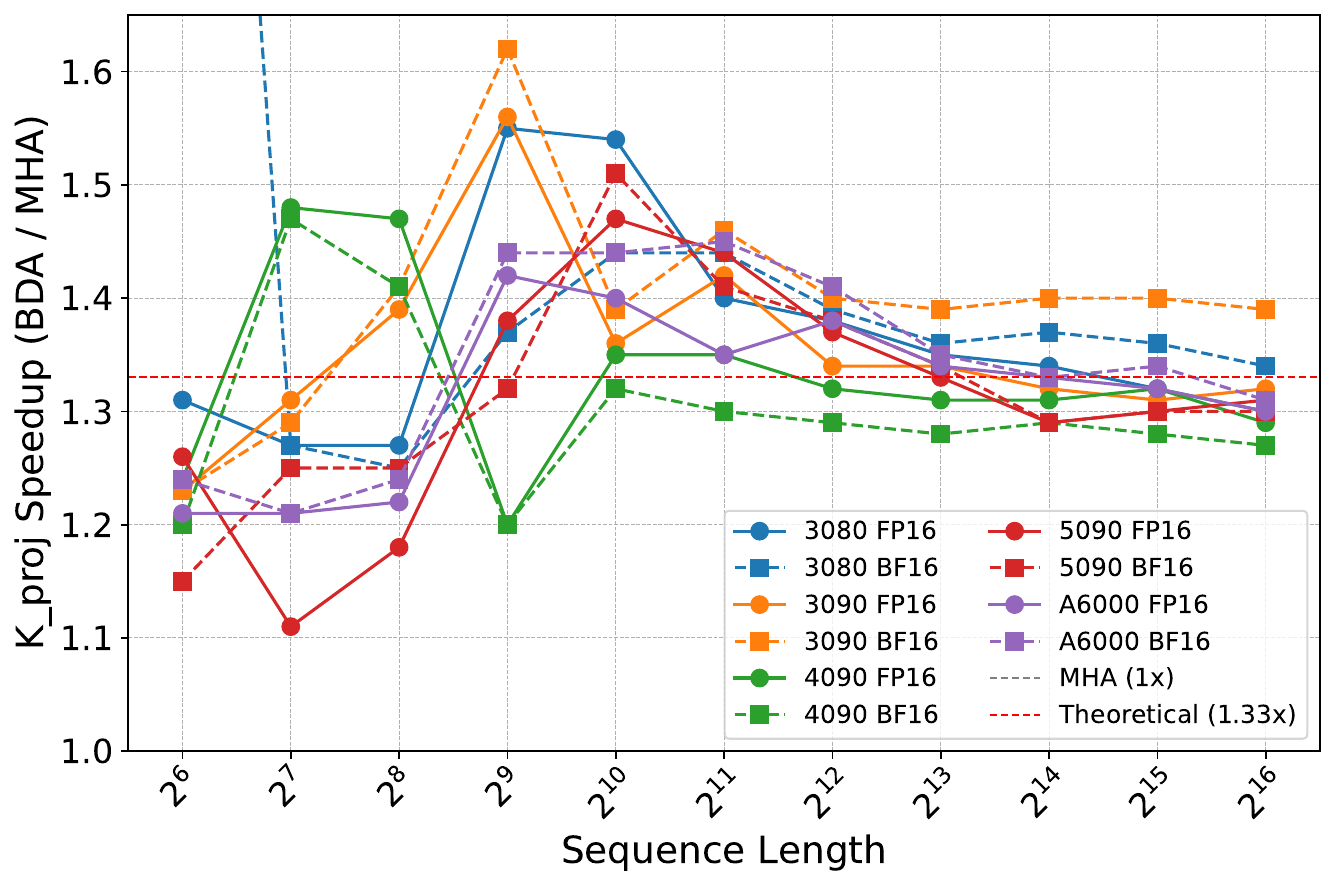}}
    \caption{\textbf{Evaluation of BD Attention (BDA).}
    \textbf{(a) Perplexity increase}: Perplexity ($\downarrow$) increase on WikiText2 when replacing all MHA layers of DeepSeek-V2-Lite with BDA.
    The increase is nearly imperceptible \textbf{0.02\% (FP16)}, with \textit{Residual-min} performing better. For reference, the dashed line shows the degradation from a structured pruning baseline at the same compression ratio ($25\%$ K/V channels).
    \textbf{(b) Efficiency}: Relative speedup of the $k\_proj$ operator under FP16 and BF16, measured across multiple NVIDIA architectures (Ampere, Ada, Blackwell).
    The dashed line at $1.33\times$ marks the theoretical bound. Speedups fluctuate around this line but remain consistent across GPUs, averaging \textbf{1.34$\times$} (FP16) and \textbf{1.37$\times$} (BF16).
    BDA also reduces parameter and memory usage by \textbf{25\%}.}
    \label{fig:bd_eval}
\vspace{-1em}
\end{figure}

% \begin{figure}[h]
%     \centering
%     \includegraphics[width=0.6\columnwidth]{plot/ppl_diff.pdf}
%     \caption{\textbf{End-to-end evaluation of BD Attention.}
%     Perplexity ($\downarrow$) increase on WikiText2 when replacing all MHA layers of DeepSeek-V2-Lite with BDA.
%     The increase is \textbf{nearly imperceptible} ($\sim$$10^{-4}$), with \textit{Residual-min} performing best.
%     For reference, the dashed line shows the degradation from a structured pruning baseline at the same compression ratio ($25\%$ K/V channels).}

%     \label{fig:bd_end2end_error}
% \end{figure}

\subsection{BDA Inference Accuracy}
\label{sec:exp_infer_acc}

Figure~\ref{fig:bd_end2end_error} reports the increase in perplexity on WikiText2 when replacing all MHA layers in DeepSeek-V2-Lite (16B) \citep{liu2024deepseekv2} with BDA.
Two strategies are compared: \textit{First-$r$}, which always selects the first $r$ rows, and \textit{Residual-min}, which adaptively selects between the first or last $r$ rows depending on the smaller reconstruction residual.

The perplexity increase is nearly imperceptible \textbf{0.0004\% (FP32), 0.02\% (FP16), 0.2\% (BF16)}, with \textit{Residual-min} consistently outperforming \textit{First-$r$}.
A similar advantage is observed at the operator level in per-layer reconstruction errors (see Appendix Table~\ref{tab:bd_reconstruct_error}), where \textit{Residual-min} achieves up to an order-of-magnitude lower error in FP32.

For reference, we also include a structured pruning baseline that removes $25\%$ of K/V channels at the same compression ratio ($d_h/d=128/512=25\%$).
This baseline follows the relative-importance scoring strategy of Zhang et al.~\citep{zhang2024plugandplay}, where each channel’s importance is estimated, summed, and the least important $25\%$ are pruned.
Although more recent structured pruning techniques \citep{ma2023llmpruner, ouderaa2024the, ashkboos2024slicegpt} achieve better performance, they typically require access to a calibration dataset, which is beyond the scope of this comparison.
Here, structured pruning is reported only as a reference for the scale of perplexity degradation at the same compression ratio.

We further report zero-shot downstream results in
Appendix~\ref{app:zeroshot}, which are consistent with the perplexity trends observed here.

% \begin{figure}[h]
% 		\centering

%   \includegraphics[width=0.5\columnwidth]{plot/speedup.pdf}

%   \caption{\textbf{Speedup of BDA} for the $k\_proj$ operator under FP16 and BF16 with the DeepSeek-V3 configuration~\citep{liu2024deepseek}.
%     The dashed line at $1.33\times$ marks the theoretical bound.
%     Measured speedups fluctuate around this line but consistently exceed the MHA baseline,
%     averaging \textbf{1.34$\times$} (FP16) and \textbf{1.37$\times$} (BF16).
%     BDA also reduces parameter and memory usage by \textbf{25\%}.}

% 		\label{fig:bd_speedup}

% \end{figure}

\subsection{BDA Inference Efficiency}

\label{sec:exp_infer_eff}

We first recall that BD reduces the FLOPs of the key projection from
$L d_h (2d-1)$ to $L d_h (2(d-d_h))$, implying a theoretical speedup of
$\frac{2d-1}{2(d-d_h)}\approx 1.33\times$ under the DeepSeek configuration
($d=512,d_h=128$).

To rigorously evaluate whether this theoretical gain translates into
practice, we consider two complementary regimes: \textbf{(i)} a \textit{standard PyTorch implementation} without any kernel fusion, and \textbf{(ii)} a \textit{fused Triton kernel} evaluated across multiple GPU architectures. Together, these settings demonstrate that BD’s efficiency does not rely on implementation-specific I/O optimizations.

\paragraph{CPU (standard PyTorch implementation).} We implemented BDA in standard PyTorch and benchmarked on CPU, without any kernel fusion or hardware-specific tuning. BDA achieves acceleration on average \textbf{1.17$\times$} in FP16 and \textbf{1.40$\times$} in BF16. Figure~\ref{fig:bd_eval_cpu} visualizes the speedup across sequence lengths.

\paragraph{GPUs (fused Triton kernel).} Because GPUs provide substantially higher peak FLOPs than CPUs, overall runtime becomes more sensitive to memory traffic. A naïve PyTorch implementation of BDA would allocate several intermediate tensors, forcing multiple round-trips to off-chip GPU memory (HBM), which is much slower than on-chip SRAM. To avoid this overhead, we develop a fused Triton kernel that executes BDA in a single load–compute–store pass, matching the memory-access pattern of the cuBLAS matmul used in MHA.

We evaluate this kernel across several NVIDIA architectures —
Ampere (A6000, RTX 3080/3090), Ada (RTX 4090), and Blackwell (RTX 5090) —
using the \textbf{same} kernel configuration without retuning. Across all
GPUs, BDA achieves near-theoretical acceleration (Fig.~\ref{fig:bd_speedup}), averaging \textbf{1.34$\times$} (FP16) and \textbf{1.37$\times$} (BF16), while also reducing parameters and memory by \textbf{25\%}. The consistency across architectures indicates that the performance gains arise from BD’s reduced arithmetic cost rather than device-specific tuning.

\begin{table}[t]
\centering
\caption{\textbf{BD applied to low-rank pruning} on LLaMA2 models (FP16).
BD further improves throughput and memory efficiency over low-rank pruning while preserving perplexity.
Best throughput and memory are highlighted in \textbf{bold}.}
\label{tab:llama_lowrank}

  % \small

\begin{tabular}{llccc}
\toprule
Model & Metric & Dense & Low rank 80\% & BD (from low-rank)  \\
\midrule
\multirow{4}{*}{LLaMA2-7B}
& Throughput (no kv cache)   & 338.23  & 368.90  & \textbf{422.58} \\
& Throughput (kv cache)      & 3726.31 & 4244.27 & \textbf{5285.60} \\
& Memory (GB)             & 12.55   & 10.21   & \textbf{8.52} \\
& PPL                     & 5.47    & 7.50    & 7.50 \\
\midrule
\multirow{4}{*}{LLaMA2-13B}
& Throughput (no kv cache)   & 181.15  & 201.50  & \textbf{238.51} \\
& Throughput (kv cache)      & 2345.99 & 2566.04 & \textbf{2856.81} \\
& Memory (GB)             & 24.36   & 19.58   & \textbf{16.35} \\
& PPL                     & 4.88    & 6.41    & 6.42 \\
\bottomrule
\end{tabular}

% \vspace{-1em}
\end{table}

\paragraph{Comparison to PIFA-style attention.} For completeness, we also implement a \textit{PIFA-style} variant (not described in original paper \citep{zhao2025pivoting}, but constructed by us for comparison): for each head $i$, \textit{PIFA-style} variant run QR with column pivoting on $\,\vec{W}_Q^i(\vec{W}_K^i)^\top\,$ to select its own basis. Because each head chooses a different basis, this forces per-head slicing and copying of $\vec{X}$, causing the memory traffic to increase by number of heads times. Consequently, PIFA-style attention is even slower than baseline MHA (Table~\ref{tab:fp16_A6000_throughput}, \ref{tab:bf16_A6000_throughput}). In contrast, BD aligns \textbf{all} heads to a shared contiguous basis, so $\vec{X}$ is sliced only once and reused across heads, dramatically reducing I/O compared with \textit{PIFA-style} variant.

% \paragraph{Summary.}
% Taken together, these results show that BD’s efficiency originates from
% the algebraic reduction in FLOPs rather than device-specific tuning.
% It improves efficiency in CPU using a standard PyTorch implementation, and it achieves near-theoretical gains on GPUs when provided with a fused implementation. Moreover, compared to PIFA-style attention, BD’s shared basis across heads avoids redundant copy.

\subsection{BD for Low-Rank Pruning}
\label{sec:exp_lowrank}

We evaluate BD when applied on top of models already compressed by low-rank pruning (Section \ref{sec:bd_linear}).

Table~\ref{tab:llama_lowrank} reports results on LLaMA2-7B and LLaMA2-13B under three settings:
(i) \textbf{Dense}, the original pretrained LLaMA2 model;
(ii) \textbf{Low-rank (80\% density)}, where weights are pruned into a low-rank structure following Zhao et al.~\citep{zhao2025pivoting};
and (iii) \textbf{BD (from low-rank)}, where the pruned low-rank weights are further transformed using Basis Decomposition (BD).
No retraining is performed in any case.

Results show that BD consistently improves efficiency over the low-rank baseline while preserving perplexity.
On average, BD increases throughput by \textbf{17.21\%} and reduces memory usage by \textbf{16.52\%} compared to low-rank models,
while keeping perplexity nearly unchanged.
This demonstrates that BD is complementary to existing compression techniques and can serve as a plug-in acceleration step for low-rank pruned models.

\subsection{BDA Training Evaluation}
\label{sec:exp_train}

In BDA training, the basis and coefficient matrices (\(\vec{B}_{qk}, \vec{C}_{qk}, \vec{B}_{vo}, \vec{C}_{vo}\)) replace the original projections as the learnable parameters, and gradients flow through them via standard autograd. No basis re-selection or BD recomputation is needed at any training step—the forward pass is identical to Algorithm~\ref{alg:mha_bd_inference}. While inference is mathematically lossless, training dynamics can still differ from MHA because gradient updates on the BD parameters are not guaranteed to be equivalent to those on the original projections.
To evaluate this, we trained Transformer models on the IWSLT’14 English-to-German \citep{cettolo-etal-2014-report} using either standard MHA or BDA as the attention module, both under the \textit{Noam} learning-rate schedule~\citep{vaswani2017attention}.
We swept across four learning-rate scales ($0.5, 1, 2, 4$).
As shown in Table~\ref{tab:train_bleu}, despite potential differences in optimization dynamics, the final BLEU scores of BDA are \textbf{comparable} to those of MHA across all settings.
All hyperparameters are detailed in Appendix~\ref{sec:train_hyper}, and were kept identical between MHA and BDA.
This highlights that BDA requires no hyperparameter search or tuning, and thus can be \textit{seamlessly integrated} into existing training pipelines without additional cost, while maintaining model quality.

\begin{table}[t]
\centering

\caption{\textbf{Training evaluation of BD Attention (BDA).}
BLEU scores (\(\uparrow\)) on the IWSLT’14 using the Transformer model.  The columns correspond to the \textbf{LR scale}, i.e., the multiplicative factor applied to the learning rate of \textit{Noam} schedule~\citep{vaswani2017attention}.  Across all scales, BDA achieves \textbf{comparable} BLEU scores to MHA, despite not guaranteeing identical gradients, and requires no hyperparameter tuning. \textbf{Bold} numbers indicate the higher BLEU for each LR scale.}

\label{tab:train_bleu}
% \small

\begin{tabular}{lcccc}
\toprule
 & LR Scale=0.5 & LR Scale=1 & LR Scale=2 & LR Scale=4 \\
\midrule
MHA & 24.98 & 23.98 & 23.86 & 24.04 \\
BDA & \textbf{25.27} & \textbf{25.04} & \textbf{23.89} & \textbf{24.14} \\
\bottomrule
\end{tabular}
\vspace{-1em}
\end{table}

\section{Conclusion and Discussion}
\label{sec:conclusion}

We presented \textbf{BD Attention (BDA)}, a novel \textit{lossless algorithmic reformulation} of multi-head attention.
By applying Basis Decomposition (BD), BDA restructures projection matrices into a compact form that eliminates redundant parameters and arithmetic operations while preserving exact outputs.
Our experiments confirmed its practical benefits: near-theoretical speedups in inference with reduced memory footprint, training performance comparable to MHA, and additional efficiency gains when applied to low-rank pruned models. Overall, BDA offers a mathematically exact and versatile foundation for attention acceleration that complements existing system-level methods. Future work includes evaluating BDA in large-scale language model pretraining to study its impact on training efficiency and model quality.

\bibliographystyle{unsrt}
\bibliography{example_paper}

@article{zhao2025pivoting,
  title={Pivoting Factorization: A Compact Meta Low-Rank Representation of Sparsity for Efficient Inference in Large Language Models},
  author={Zhao, Jialin and Zhang, Yingtao and Cannistraci, Carlo Vittorio},
  journal={arXiv preprint arXiv:2501.19090},
  year={2025}
}

@article{businger1971linear,
  title={Linear least squares solutions by Householder transformations},
  author={Businger, P and Golub, GH},
  journal={Handbook for automatic computation},
  volume={2},
  pages={111--118},
  year={1971},
  publisher={Springer}
}

@article{vaswani2017attention,
  title={Attention is all you need},
  author={Vaswani, Ashish and Shazeer, Noam and Parmar, Niki and Uszkoreit, Jakob and Jones, Llion and Gomez, Aidan N and Kaiser, {\L}ukasz and Polosukhin, Illia},
  journal={Advances in neural information processing systems},
  volume={30},
  year={2017}
}

@inproceedings{
hsu2022language,
title={Language model compression with weighted low-rank factorization},
author={Yen-Chang Hsu and Ting Hua and Sungen Chang and Qian Lou and Yilin Shen and Hongxia Jin},
booktitle={International Conference on Learning Representations},
year={2022},
url={https://openreview.net/forum?id=uPv9Y3gmAI5}
}

@article{yuan2023asvd,
  title={Asvd: Activation-aware singular value decomposition for compressing large language models},
  author={Yuan, Zhihang and Shang, Yuzhang and Song, Yue and Wu, Qiang and Yan, Yan and Sun, Guangyu},
  journal={arXiv preprint arXiv:2312.05821},
  year={2023}
}

@article{wang2024svd,
  title={Svd-llm: Truncation-aware singular value decomposition for large language model compression},
  author={Wang, Xin and Zheng, Yu and Wan, Zhongwei and Zhang, Mi},
  journal={arXiv preprint arXiv:2403.07378},
  year={2024}
}

@article{jaiswal2024WeLore,
  title={From galore to welore: How low-rank weights non-uniformly emerge from low-rank gradients},
  author={Jaiswal, Ajay and Yin, Lu and Zhang, Zhenyu and Liu, Shiwei and Zhao, Jiawei and Tian, Yuandong and Wang, Zhangyang},
  journal={arXiv preprint arXiv:2407.11239},
  year={2024}
}

@article{saha2024compressing,
  title={Compressing large language models using low rank and low precision decomposition},
  author={Saha, Rajarshi and Sagan, Naomi and Srivastava, Varun and Goldsmith, Andrea and Pilanci, Mert},
  journal={Advances in Neural Information Processing Systems},
  volume={37},
  pages={88981--89018},
  year={2024}
}

@article{kaushal2023lord,
  title={Lord: Low rank decomposition of monolingual code llms for one-shot compression},
  author={Kaushal, Ayush and Vaidhya, Tejas and Rish, Irina},
  journal={arXiv preprint arXiv:2309.14021},
  year={2023}
}

@article{sharma2023truth,
  title={The truth is in there: Improving reasoning in language models with layer-selective rank reduction},
  author={Sharma, Pratyusha and Ash, Jordan T and Misra, Dipendra},
  journal={arXiv preprint arXiv:2312.13558},
  year={2023}
}

@inproceedings{qinsidobi,
  title={Dobi-SVD: Differentiable SVD for LLM Compression and Some New Perspectives},
  author={Qinsi, Wang and Ke, Jinghan and Tomizuka, Masayoshi and Keutzer, Kurt and Xu, Chenfeng},
  booktitle={The Thirteenth International Conference on Learning Representations}
}

@article{zhao2023inrank,
  title={Inrank: Incremental low-rank learning},
  author={Zhao, Jiawei and Zhang, Yifei and Chen, Beidi and Sch{\"a}fer, Florian and Anandkumar, Anima},
  journal={arXiv preprint arXiv:2306.11250},
  year={2023}
}

@inproceedings{
khodak2021initialization,
title={Initialization and Regularization of Factorized Neural Layers},
author={Mikhail Khodak and Neil A. Tenenholtz and Lester Mackey and Nicolo Fusi},
booktitle={International Conference on Learning Representations},
year={2021},
url={https://openreview.net/forum?id=KTlJT1nof6d}
}

@misc{liu2025eorafinetuningfreecompensationcompressed,
      title={EoRA: Fine-tuning-free Compensation for Compressed LLM with Eigenspace Low-Rank Approximation}, 
      author={Shih-Yang Liu and Maksim Khadkevich and Nai Chit Fung and Charbel Sakr and Chao-Han Huck Yang and Chien-Yi Wang and Saurav Muralidharan and Hongxu Yin and Kwang-Ting Cheng and Jan Kautz and Yu-Chiang Frank Wang and Pavlo Molchanov and Min-Hung Chen},
      year={2025},
      eprint={2410.21271},
      archivePrefix={arXiv},
      primaryClass={cs.CL},
      url={https://arxiv.org/abs/2410.21271}, 
}

@inproceedings{
sakr2024espace,
title={{ESPACE}: Dimensionality Reduction of Activations for Model Compression},
author={Charbel Sakr and Brucek Khailany},
booktitle={The Thirty-eighth Annual Conference on Neural Information Processing Systems},
year={2024},
url={https://openreview.net/forum?id=HAcaANQNMK}
}

@article{kamalakara2022exploring,
  title={Exploring low rank training of deep neural networks},
  author={Kamalakara, Siddhartha Rao and Locatelli, Acyr and Venkitesh, Bharat and Ba, Jimmy and Gal, Yarin and Gomez, Aidan N},
  journal={arXiv preprint arXiv:2209.13569},
  year={2022}
}

@article{schotthofer2022low,
  title={Low-rank lottery tickets: finding efficient low-rank neural networks via matrix differential equations},
  author={Schotth{\"o}fer, Steffen and Zangrando, Emanuele and Kusch, Jonas and Ceruti, Gianluca and Tudisco, Francesco},
  journal={Advances in Neural Information Processing Systems},
  volume={35},
  pages={20051--20063},
  year={2022}
}

@inproceedings{ren-zhu-2024-low,
    title = "Low-Rank Prune-And-Factorize for Language Model Compression",
    author = "Ren, Siyu  and
      Zhu, Kenny Q.",
    editor = "Calzolari, Nicoletta  and
      Kan, Min-Yen  and
      Hoste, Veronique  and
      Lenci, Alessandro  and
      Sakti, Sakriani  and
      Xue, Nianwen",
    booktitle = "Proceedings of the 2024 Joint International Conference on Computational Linguistics, Language Resources and Evaluation (LREC-COLING 2024)",
    month = may,
    year = "2024",
    address = "Torino, Italia",
    publisher = "ELRA and ICCL",
    url = "https://aclanthology.org/2024.lrec-main.945/",
    pages = "10822--10832",
    abstract = "The components underpinning PLMs{---}large weight matrices{---}were shown to bear considerable redundancy. Matrix factorization, a well-established technique from matrix theory, has been utilized to reduce the number of parameters in PLM. However, it fails to retain satisfactory performance under moderate to high compression rates. In this paper, we identify the full-rankness of fine-tuned PLM as the fundamental bottleneck for the failure of matrix factorization and explore the use of network pruning to extract low-rank sparsity pattern desirable to matrix factorization. We find such a low-rank sparsity pattern exclusively exists in models generated by first-order pruning, which motivates us to unite the two approaches and achieve more effective model compression. We further propose two techniques: sparsity-aware SVD and mixed-rank fine-tuning, which improve the initialization and training of the compression procedure, respectively. Experiments on GLUE and question-answering tasks show that the proposed method has a superior compression-performance trade-off compared to existing approaches."
}

@misc{lin2024modegptmodulardecompositionlarge,
      title={MoDeGPT: Modular Decomposition for Large Language Model Compression}, 
      author={Chi-Heng Lin and Shangqian Gao and James Seale Smith and Abhishek Patel and Shikhar Tuli and Yilin Shen and Hongxia Jin and Yen-Chang Hsu},
      year={2024},
      eprint={2408.09632},
      archivePrefix={arXiv},
      primaryClass={cs.LG},
      url={https://arxiv.org/abs/2408.09632}, 
}

@inproceedings{
zhang2025oats,
title={{OATS}: Outlier-Aware Pruning Through Sparse and Low Rank Decomposition},
author={Stephen Zhang and Vardan Papyan},
booktitle={The Thirteenth International Conference on Learning Representations},
year={2025},
url={https://openreview.net/forum?id=DLDuVbxORA}
}

@inproceedings{
savostianova2023robust,
title={Robust low-rank training via approximate orthonormal constraints},
author={Dayana Savostianova and Emanuele Zangrando and Gianluca Ceruti and Francesco Tudisco},
booktitle={Thirty-seventh Conference on Neural Information Processing Systems},
year={2023},
url={https://openreview.net/forum?id=NJPSvv0u3R}
}

@inproceedings{hajimolahoseini2022strategies,
  title={Strategies for applying low rank decomposition to transformer-based models},
  author={Hajimolahoseini, Habib and Ahmed, Walid and Rezagholizadeh, Mehdi and Partovinia, Vahid and Liu, Yang},
  booktitle={36th Conference on Neural Information Processing Systems (NeurIPS2022)},
  volume={6},
  year={2022}
}

@inproceedings{han2024sltrain,
  title={{SLTrain}: a sparse plus low-rank approach for parameter and memory efficient pretraining},
  author={Han, Andi and Li, Jiaxiang and Huang, Wei and Hong, Mingyi and Takeda, Akiko and Jawanpuria, Pratik and Mishra, Bamdev},
  booktitle = {Advances in Neural Information Processing Systems},
  volume = {37},
  year={2024}
}

@InProceedings{pmlr-v202-li23ap,
  title = 	 {{L}o{S}parse: Structured Compression of Large Language Models based on Low-Rank and Sparse Approximation},
  author =       {Li, Yixiao and Yu, Yifan and Zhang, Qingru and Liang, Chen and He, Pengcheng and Chen, Weizhu and Zhao, Tuo},
  booktitle = 	 {Proceedings of the 40th International Conference on Machine Learning},
  pages = 	 {20336--20350},
  year = 	 {2023},
  editor = 	 {Krause, Andreas and Brunskill, Emma and Cho, Kyunghyun and Engelhardt, Barbara and Sabato, Sivan and Scarlett, Jonathan},
  volume = 	 {202},
  series = 	 {Proceedings of Machine Learning Research},
  month = 	 {23--29 Jul},
  publisher =    {PMLR},
  pdf = 	 {https://proceedings.mlr.press/v202/li23ap/li23ap.pdf},
  url = 	 {https://proceedings.mlr.press/v202/li23ap.html},
  abstract = 	 {Transformer models have achieved remarkable results in various natural language tasks, but they are often prohibitively large, requiring massive memories and computational resources. To re- duce the size and complexity of these models, we propose LoSparse (Low-Rank and Sparse ap- proximation), a novel model compression tech- nique that approximates a weight matrix by the sum of a low-rank matrix and a sparse matrix. Our method combines the advantages of both low- rank approximations and pruning, while avoid- ing their limitations. Low-rank approximation compresses the coherent and expressive parts in neurons, while pruning removes the incoherent and non-expressive parts in neurons. Pruning enhances the diversity of low-rank approxima- tions, and low-rank approximation prevents prun- ing from losing too many expressive neurons. We evaluate our method on natural language under- standing, question answering, and natural lan- guage generation tasks. We show that it signif- icantly outperforms existing compression meth- ods. Our code is publicly available at https: //github.com/yxli2123/LoSparse}
}

@inproceedings{
hu2022lora,
title={Lo{RA}: Low-Rank Adaptation of Large Language Models},
author={Edward J Hu and yelong shen and Phillip Wallis and Zeyuan Allen-Zhu and Yuanzhi Li and Shean Wang and Lu Wang and Weizhu Chen},
booktitle={International Conference on Learning Representations},
year={2022},
url={https://openreview.net/forum?id=nZeVKeeFYf9}
}

@inproceedings{
lialin2024relora,
title={ReLo{RA}: High-Rank Training Through Low-Rank Updates},
author={Vladislav Lialin and Sherin Muckatira and Namrata Shivagunde and Anna Rumshisky},
booktitle={The Twelfth International Conference on Learning Representations},
year={2024},
url={https://openreview.net/forum?id=DLJznSp6X3}
}

@inproceedings{
zhang2023adaptive,
title={Adaptive Budget Allocation for Parameter-Efficient Fine-Tuning },
author={Qingru Zhang and Minshuo Chen and Alexander Bukharin and Pengcheng He and Yu Cheng and Weizhu Chen and Tuo Zhao},
booktitle={The Eleventh International Conference on Learning Representations },
year={2023},
url={https://openreview.net/forum?id=lq62uWRJjiY}
}

@inproceedings{
meng2024pissa,
title={Pi{SSA}: Principal Singular Values and Singular Vectors Adaptation of Large Language Models},
author={Fanxu Meng and Zhaohui Wang and Muhan Zhang},
booktitle={The Thirty-eighth Annual Conference on Neural Information Processing Systems},
year={2024},
url={https://openreview.net/forum?id=6ZBHIEtdP4}
}

@inproceedings{dora,
author = {Liu, Shih-Yang and Wang, Chien-Yi and Yin, Hongxu and Molchanov, Pavlo and Wang, Yu-Chiang Frank and Cheng, Kwang-Ting and Chen, Min-Hung},
title = {DoRA: weight-decomposed low-rank adaptation},
year = {2024},
publisher = {JMLR.org},
abstract = {Among the widely used parameter-efficient fine-tuning (PEFT) methods, LoRA and its variants have gained considerable popularity because of avoiding additional inference costs. However, there still often exists an accuracy gap between these methods and full fine-tuning (FT). In this work, we first introduce a novel weight decomposition analysis to investigate the inherent differences between FT and LoRA. Aiming to resemble the learning capacity of FT from the findings, we propose Weight-Decomposed Low-Rank Adaptation (DoRA). DoRA decomposes the pre-trained weight into two components, magnitude and direction, for fine-tuning, specifically employing LoRA for directional updates to efficiently minimize the number of trainable parameters. By employing DoRA, we enhance both the learning capacity and training stability of LoRA while avoiding any additional inference overhead. DoRA consistently outperforms LoRA on fine-tuning LLaMA, LLaVA, and VL-BART on various downstream tasks, such as commonsense reasoning, visual instruction tuning, and image/video-text understanding. Code is available at https://github.com/NVlabs/DoRA.},
booktitle = {Proceedings of the 41st International Conference on Machine Learning},
articleno = {1299},
numpages = {22},
location = {Vienna, Austria},
series = {ICML'24}
}

@misc{zhang2023increloraincrementalparameterallocation,
      title={IncreLoRA: Incremental Parameter Allocation Method for Parameter-Efficient Fine-tuning}, 
      author={Feiyu Zhang and Liangzhi Li and Junhao Chen and Zhouqiang Jiang and Bowen Wang and Yiming Qian},
      year={2023},
      eprint={2308.12043},
      archivePrefix={arXiv},
      primaryClass={cs.CL},
      url={https://arxiv.org/abs/2308.12043}, 
}

@misc{brown2020languagemodelsfewshotlearners,
      title={Language Models are Few-Shot Learners}, 
      author={Tom B. Brown and Benjamin Mann and Nick Ryder and Melanie Subbiah and Jared Kaplan and Prafulla Dhariwal and Arvind Neelakantan and Pranav Shyam and Girish Sastry and Amanda Askell and Sandhini Agarwal and Ariel Herbert-Voss and Gretchen Krueger and Tom Henighan and Rewon Child and Aditya Ramesh and Daniel M. Ziegler and Jeffrey Wu and Clemens Winter and Christopher Hesse and Mark Chen and Eric Sigler and Mateusz Litwin and Scott Gray and Benjamin Chess and Jack Clark and Christopher Berner and Sam McCandlish and Alec Radford and Ilya Sutskever and Dario Amodei},
      year={2020},
      eprint={2005.14165},
      archivePrefix={arXiv},
      primaryClass={cs.CL},
      url={https://arxiv.org/abs/2005.14165}, 
}

@article{radford2018improving,
  title={Improving language understanding by generative pre-training},
  author={Radford, Alec and Narasimhan, Karthik and Salimans, Tim and Sutskever, Ilya and others},
  year={2018},
  publisher={San Francisco, CA, USA}
}

@article{touvron2023llama,
  title={Llama: Open and efficient foundation language models},
  author={Touvron, Hugo and Lavril, Thibaut and Izacard, Gautier and Martinet, Xavier and Lachaux, Marie-Anne and Lacroix, Timoth{\'e}e and Rozi{\`e}re, Baptiste and Goyal, Naman and Hambro, Eric and Azhar, Faisal and others},
  journal={arXiv preprint arXiv:2302.13971},
  year={2023}
}

@inproceedings{
dosovitskiy2021an,
title={An Image is Worth 16x16 Words: Transformers for Image Recognition at Scale},
author={Alexey Dosovitskiy and Lucas Beyer and Alexander Kolesnikov and Dirk Weissenborn and Xiaohua Zhai and Thomas Unterthiner and Mostafa Dehghani and Matthias Minderer and Georg Heigold and Sylvain Gelly and Jakob Uszkoreit and Neil Houlsby},
booktitle={International Conference on Learning Representations},
year={2021},
url={https://openreview.net/forum?id=YicbFdNTTy}
}

@inproceedings{radford2021learning,
  title={Learning transferable visual models from natural language supervision},
  author={Radford, Alec and Kim, Jong Wook and Hallacy, Chris and Ramesh, Aditya and Goh, Gabriel and Agarwal, Sandhini and Sastry, Girish and Askell, Amanda and Mishkin, Pamela and Clark, Jack and others},
  booktitle={International conference on machine learning},
  pages={8748--8763},
  year={2021},
  organization={PmLR}
}

@inproceedings{li2022blip,
  title={Blip: Bootstrapping language-image pre-training for unified vision-language understanding and generation},
  author={Li, Junnan and Li, Dongxu and Xiong, Caiming and Hoi, Steven},
  booktitle={International conference on machine learning},
  pages={12888--12900},
  year={2022},
  organization={PMLR}
}

@article{liu2024deepseek,
  title={Deepseek-v3 technical report},
  author={Liu, Aixin and Feng, Bei and Xue, Bing and Wang, Bingxuan and Wu, Bochao and Lu, Chengda and Zhao, Chenggang and Deng, Chengqi and Zhang, Chenyu and Ruan, Chong and others},
  journal={arXiv preprint arXiv:2412.19437},
  year={2024}
}

@article{liu2024deepseekv2,
  title={Deepseek-v2: A strong, economical, and efficient mixture-of-experts language model},
  author={Liu, Aixin and Feng, Bei and Wang, Bin and Wang, Bingxuan and Liu, Bo and Zhao, Chenggang and Dengr, Chengqi and Ruan, Chong and Dai, Damai and Guo, Daya and others},
  journal={arXiv preprint arXiv:2405.04434},
  year={2024}
}

@inproceedings{
zhang2024plugandplay,
title={Plug-and-Play: An Efficient Post-training Pruning Method for Large Language Models},
author={Yingtao Zhang and Haoli Bai and Haokun Lin and Jialin Zhao and Lu Hou and Carlo Vittorio Cannistraci},
booktitle={The Twelfth International Conference on Learning Representations},
year={2024},
url={https://openreview.net/forum?id=Tr0lPx9woF}
}

@inproceedings{ma2023llmpruner,
  title={LLM-Pruner: On the Structural Pruning of Large Language Models},
  author={Xinyin Ma and Gongfan Fang and Xinchao Wang},
  booktitle={Advances in Neural Information Processing Systems},
  year={2023},
}

@inproceedings{
ouderaa2024the,
title={The {LLM} Surgeon},
author={Tycho F. A. van der Ouderaa and Markus Nagel and Mart Van Baalen and Tijmen Blankevoort},
booktitle={The Twelfth International Conference on Learning Representations},
year={2024},
url={https://openreview.net/forum?id=DYIIRgwg2i}
}

@inproceedings{
ashkboos2024slicegpt,
title={Slice{GPT}: Compress Large Language Models by Deleting Rows and Columns},
author={Saleh Ashkboos and Maximilian L. Croci and Marcelo Gennari do Nascimento and Torsten Hoefler and James Hensman},
booktitle={The Twelfth International Conference on Learning Representations},
year={2024},
url={https://openreview.net/forum?id=vXxardq6db}
}

@inproceedings{cettolo-etal-2014-report,
    title = "Report on the 11th {IWSLT} evaluation campaign",
    author = {Cettolo, Mauro  and
      Niehues, Jan  and
      St{\"u}ker, Sebastian  and
      Bentivogli, Luisa  and
      Federico, Marcello},
    editor = {Federico, Marcello  and
      St{\"u}ker, Sebastian  and
      Yvon, Fran{\c{c}}ois},
    booktitle = "Proceedings of the 11th International Workshop on Spoken Language Translation: Evaluation Campaign",
    month = dec # " 4-5",
    year = "2014",
    address = "Lake Tahoe, California",
    url = "https://aclanthology.org/2014.iwslt-evaluation.1",
    pages = "2--17",
    abstract = "The paper overviews the 11th evaluation campaign organized by the IWSLT workshop. The 2014 evaluation offered multiple tracks on lecture transcription and translation based on the TED Talks corpus. In particular, this year IWSLT included three automatic speech recognition tracks, on English, German and Italian, five speech translation tracks, from English to French, English to German, German to English, English to Italian, and Italian to English, and five text translation track, also from English to French, English to German, German to English, English to Italian, and Italian to English. In addition to the official tracks, speech and text translation optional tracks were offered, globally involving 12 other languages: Arabic, Spanish, Portuguese (B), Hebrew, Chinese, Polish, Persian, Slovenian, Turkish, Dutch, Romanian, Russian. Overall, 21 teams participated in the evaluation, for a total of 76 primary runs submitted. Participants were also asked to submit runs on the 2013 test set (progress test set), in order to measure the progress of systems with respect to the previous year. All runs were evaluated with objective metrics, and submissions for two of the official text translation tracks were also evaluated with human post-editing.",
}

@article{dao2022flashattention,
  title={Flashattention: Fast and memory-efficient exact attention with io-awareness},
  author={Dao, Tri and Fu, Dan and Ermon, Stefano and Rudra, Atri and R{\'e}, Christopher},
  journal={Advances in neural information processing systems},
  volume={35},
  pages={16344--16359},
  year={2022}
}

@article{dao2023flashattention,
  title={Flashattention-2: Faster attention with better parallelism and work partitioning},
  author={Dao, Tri},
  journal={arXiv preprint arXiv:2307.08691},
  year={2023}
}

@inproceedings{NEURIPS2024_7ede97c3,
 author = {Shah, Jay and Bikshandi, Ganesh and Zhang, Ying and Thakkar, Vijay and Ramani, Pradeep and Dao, Tri},
 booktitle = {Advances in Neural Information Processing Systems},
 editor = {A. Globerson and L. Mackey and D. Belgrave and A. Fan and U. Paquet and J. Tomczak and C. Zhang},
 pages = {68658--68685},
 publisher = {Curran Associates, Inc.},
 title = {FlashAttention-3: Fast and Accurate Attention with Asynchrony and Low-precision},
 url = {https://proceedings.neurips.cc/paper_files/paper/2024/file/7ede97c3e082c6df10a8d6103a2eebd2-Paper-Conference.pdf},
 volume = {37},
 year = {2024}
}

@inproceedings{kwon2023efficient,
  title={Efficient memory management for large language model serving with pagedattention},
  author={Kwon, Woosuk and Li, Zhuohan and Zhuang, Siyuan and Sheng, Ying and Zheng, Lianmin and Yu, Cody Hao and Gonzalez, Joseph and Zhang, Hao and Stoica, Ion},
  booktitle={Proceedings of the 29th symposium on operating systems principles},
  pages={611--626},
  year={2023}
}

@inproceedings{katharopoulos2020transformers,
  title={Transformers are rnns: Fast autoregressive transformers with linear attention},
  author={Katharopoulos, Angelos and Vyas, Apoorv and Pappas, Nikolaos and Fleuret, Fran{\c{c}}ois},
  booktitle={International conference on machine learning},
  pages={5156--5165},
  year={2020},
  organization={PMLR}
}

@article{wang2020linformer,
  title={Linformer: Self-attention with linear complexity},
  author={Wang, Sinong and Li, Belinda Z and Khabsa, Madian and Fang, Han and Ma, Hao},
  journal={arXiv preprint arXiv:2006.04768},
  year={2020}
}

@inproceedings{
choromanski2021rethinking,
title={Rethinking Attention with Performers},
author={Krzysztof Marcin Choromanski and Valerii Likhosherstov and David Dohan and Xingyou Song and Andreea Gane and Tamas Sarlos and Peter Hawkins and Jared Quincy Davis and Afroz Mohiuddin and Lukasz Kaiser and David Benjamin Belanger and Lucy J Colwell and Adrian Weller},
booktitle={International Conference on Learning Representations},
year={2021},
url={https://openreview.net/forum?id=Ua6zuk0WRH}
}

@article{child2019generating,
  title={Generating long sequences with sparse transformers},
  author={Child, Rewon and Gray, Scott and Radford, Alec and Sutskever, Ilya},
  journal={arXiv preprint arXiv:1904.10509},
  year={2019}
}

@article{beltagy2020longformer,
  title={Longformer: The long-document transformer},
  author={Beltagy, Iz and Peters, Matthew E and Cohan, Arman},
  journal={arXiv preprint arXiv:2004.05150},
  year={2020}
}

@inproceedings{
Kitaev2020Reformer,
title={Reformer: The Efficient Transformer},
author={Nikita Kitaev and Lukasz Kaiser and Anselm Levskaya},
booktitle={International Conference on Learning Representations},
year={2020},
url={https://openreview.net/forum?id=rkgNKkHtvB}
}

@inproceedings{frantar2023sparsegpt,
  title={Sparsegpt: Massive language models can be accurately pruned in one-shot},
  author={Frantar, Elias and Alistarh, Dan},
  booktitle={International Conference on Machine Learning},
  pages={10323--10337},
  year={2023},
  organization={PMLR}
}

@inproceedings{
sun2024a,
title={A Simple and Effective Pruning Approach for Large Language Models},
author={Mingjie Sun and Zhuang Liu and Anna Bair and J Zico Kolter},
booktitle={The Twelfth International Conference on Learning Representations},
year={2024},
url={https://openreview.net/forum?id=PxoFut3dWW}
}

@article{mishra2021accelerating,
  title={Accelerating sparse deep neural networks},
  author={Mishra, Asit and Latorre, Jorge Albericio and Pool, Jeff and Stosic, Darko and Stosic, Dusan and Venkatesh, Ganesh and Yu, Chong and Micikevicius, Paulius},
  journal={arXiv preprint arXiv:2104.08378},
  year={2021}
}

@inproceedings{
frantar2023optq,
title={{OPTQ}: Accurate Quantization for Generative Pre-trained Transformers},
author={Elias Frantar and Saleh Ashkboos and Torsten Hoefler and Dan Alistarh},
booktitle={The Eleventh International Conference on Learning Representations },
year={2023},
url={https://openreview.net/forum?id=tcbBPnfwxS}
}

@inproceedings{MLSYS2024_42a452cb,
 author = {Lin, Ji and Tang, Jiaming and Tang, Haotian and Yang, Shang and Chen, Wei-Ming and Wang, Wei-Chen and Xiao, Guangxuan and Dang, Xingyu and Gan, Chuang and Han, Song},
 booktitle = {Proceedings of Machine Learning and Systems},
 editor = {P. Gibbons and G. Pekhimenko and C. De Sa},
 pages = {87--100},
 title = {AWQ: Activation-aware Weight Quantization for On-Device LLM Compression and Acceleration},
 url = {https://proceedings.mlsys.org/paper_files/paper/2024/file/42a452cbafa9dd64e9ba4aa95cc1ef21-Paper-Conference.pdf},
 volume = {6},
 year = {2024}
}

@InProceedings{xiao2023smoothquant,
    title = {{S}mooth{Q}uant: Accurate and Efficient Post-Training Quantization for Large Language Models},
    author = {Xiao, Guangxuan and Lin, Ji and Seznec, Mickael and Wu, Hao and Demouth, Julien and Han, Song},
    booktitle = {Proceedings of the 40th International Conference on Machine Learning},
    year = {2023}
}

@inproceedings{
lin2024duquant,
title={DuQuant: Distributing Outliers via Dual Transformation Makes Stronger Quantized {LLM}s},
author={Haokun Lin and Haobo Xu and Yichen Wu and Jingzhi Cui and Yingtao Zhang and Linzhan Mou and Linqi Song and Zhenan Sun and Ying Wei},
booktitle={The Thirty-eighth Annual Conference on Neural Information Processing Systems},
year={2024},
url={https://openreview.net/forum?id=mp8u2Pcmqz}
}

@misc{su2021roformer,
      title={RoFormer: Enhanced Transformer with Rotary Position Embedding}, 
      author={Jianlin Su and Yu Lu and Shengfeng Pan and Bo Wen and Yunfeng Liu},
      year={2021},
      eprint={2104.09864},
      archivePrefix={arXiv},
      primaryClass={cs.CL}
}

@inproceedings{schlag2021linear,
  title={Linear transformers are secretly fast weight programmers},
  author={Schlag, Imanol and Irie, Kazuki and Schmidhuber, J{\"u}rgen},
  booktitle={International conference on machine learning},
  pages={9355--9366},
  year={2021},
  organization={PMLR}
}

@article{Clark2018ThinkYH,
  title={Think you have Solved Question Answering? Try ARC, the AI2 Reasoning Challenge},
  author={Peter Clark and Isaac Cowhey and Oren Etzioni and Tushar Khot and Ashish Sabharwal and Carissa Schoenick and Oyvind Tafjord},
  journal={ArXiv},
  year={2018},
  volume={abs/1803.05457}
}

@inproceedings{NEURIPS2019_4496bf24,
    author = {Wang, Alex and Pruksachatkun, Yada and Nangia, Nikita and Singh, Amanpreet and Michael, Julian and Hill, Felix and Levy, Omer and Bowman, Samuel},
    booktitle = {Advances in Neural Information Processing Systems},
    editor = {H. Wallach and H. Larochelle and A. Beygelzimer and F. d\textquotesingle Alch\'{e}-Buc and E. Fox and R. Garnett},
    pages = {},
    publisher = {Curran Associates, Inc.},
    title = {SuperGLUE: A Stickier Benchmark for General-Purpose Language Understanding Systems},
    url = {https://proceedings.neurips.cc/paper/2019/file/4496bf24afe7fab6f046bf4923da8de6-Paper.pdf},
    volume = {32},
    year = {2019}
}

@inproceedings{zellers2019hellaswag,
    title={HellaSwag: Can a Machine Really Finish Your Sentence?},
    author={Zellers, Rowan and Holtzman, Ari and Bisk, Yonatan and Farhadi, Ali and Choi, Yejin},
    booktitle ={Proceedings of the 57th Annual Meeting of the Association for Computational Linguistics},
    year={2019}
}

@misc{eval-harness,
  author       = {Gao, Leo and Tow, Jonathan and Abbasi, Baber and Biderman, Stella and Black, Sid and DiPofi, Anthony and Foster, Charles and Golding, Laurence and Hsu, Jeffrey and Le Noac'h, Alain and Li, Haonan and McDonell, Kyle and Muennighoff, Niklas and Ociepa, Chris and Phang, Jason and Reynolds, Laria and Schoelkopf, Hailey and Skowron, Aviya and Sutawika, Lintang and Tang, Eric and Thite, Anish and Wang, Ben and Wang, Kevin and Zou, Andy},
  title        = {A framework for few-shot language model evaluation},
  month        = 12,
  year         = 2023,
  publisher    = {Zenodo},
  version      = {v0.4.0},
  doi          = {10.5281/zenodo.10256836},
  url          = {https://zenodo.org/records/10256836}
}

@article{chen2021skyformer,
  title={Skyformer: Remodel self-attention with gaussian kernel and nystr$\backslash$" om method},
  author={Chen, Yifan and Zeng, Qi and Ji, Heng and Yang, Yun},
  journal={Advances in Neural Information Processing Systems},
  volume={34},
  pages={2122--2135},
  year={2021}
}

@inproceedings{chen2022sketching,
  title={Sketching as a tool for understanding and accelerating self-attention for long sequences},
  author={Chen, Yifan and Zeng, Qi and Hakkani-Tur, Dilek and Jin, Di and Ji, Heng and Yang, Yun},
  booktitle={Proceedings of the 2022 Conference of the North American Chapter of the Association for Computational Linguistics: Human Language Technologies},
  pages={5187--5199},
  year={2022}
}

@article{mandt2017stochastic,
  title={Stochastic gradient descent as approximate bayesian inference},
  author={Mandt, Stephan and Hoffman, Matthew D and Blei, David M},
  journal={Journal of Machine Learning Research},
  volume={18},
  number={134},
  pages={1--35},
  year={2017}
}

@article{zhou2020towards,
  title={Towards theoretically understanding why sgd generalizes better than adam in deep learning},
  author={Zhou, Pan and Feng, Jiashi and Ma, Chao and Xiong, Caiming and Hoi, Steven Chu Hong and others},
  journal={Advances in Neural Information Processing Systems},
  volume={33},
  pages={21285--21296},
  year={2020}
}

@article{voronin2017efficient,
  title={Efficient algorithms for CUR and interpolative matrix decompositions},
  author={Voronin, Sergey and Martinsson, Per-Gunnar},
  journal={Advances in Computational Mathematics},
  volume={43},
  number={3},
  pages={495--516},
  year={2017},
  publisher={Springer}
}

@article{chee2022model,
  title={Model preserving compression for neural networks},
  author={Chee, Jerry and Damle, Anil and De Sa, Christopher M and others},
  journal={Advances in neural information processing systems},
  volume={35},
  pages={38060--38074},
  year={2022}
}

%%%%%%%%%%%%%%%%%%%%%%%%%%%%%%%%%%%%%%%%%%%%%%%%%%%%%%%%%%%%%%%%%%%%%%%%%%%%%%%
%%%%%%%%%%%%%%%%%%%%%%%%%%%%%%%%%%%%%%%%%%%%%%%%%%%%%%%%%%%%%%%%%%%%%%%%%%%%%%%
% APPENDIX
%%%%%%%%%%%%%%%%%%%%%%%%%%%%%%%%%%%%%%%%%%%%%%%%%%%%%%%%%%%%%%%%%%%%%%%%%%%%%%%
%%%%%%%%%%%%%%%%%%%%%%%%%%%%%%%%%%%%%%%%%%%%%%%%%%%%%%%%%%%%%%%%%%%%%%%%%%%%%%%
\newpage
\appendix

\begin{algorithm}[ht]
\caption{BD Decomposition (Row)}
\label{alg:bd}
\begin{algorithmic}[1]
\INPUT \(\vec{W} \leftarrow \Vec{U}\Vec{V}^{\top}\), \(\vec{W} \in \mathbb{R}^{m \times n}\) with rank \(r\)
\STATE Extract first-\(r\) rows: \(\vec{B}_{F} \leftarrow\) first \(r\) rows of \(\vec{W}\)
\STATE Extract last-\(r\) rows: \(\vec{B}_{L} \leftarrow\) last \(r\) rows of \(\vec{W}\)

\STATE Solve coefficients:\par
\(\vec{C}_{F} \gets \mathrm{linsolve} (\vec{W}^{\prime} \setminus \vec{B}_{F} = \vec{C}_{F} \vec{B}_{F})\)\par
\(\vec{C}_{L} \gets \mathrm{linsolve} (\vec{W}^{\prime} \setminus \vec{B}_{L} = \vec{C}_{L} \vec{B}_{L})\)

\STATE Compute residuals\par
      \(R_{F} \gets \bigl\lVert \vec{W} -
        \begin{bmatrix}\vec{B}_{F} \\ \vec{C}_{F}\vec{B}_{F}\end{bmatrix}\bigr\rVert_F\)\par
      \(R_{L} \gets \bigl\lVert \vec{W} -
        \begin{bmatrix}\vec{C}_{L}\vec{B}_{L} \\ \vec{B}_{L}\end{bmatrix}\bigr\rVert_F\)
\STATE \textbf{Select better candidate}\par
      \textbf{if} \(R_{F}\le R_{L}\) \textbf{then}\par
      \quad \textit{tag}$\leftarrow$\,FIRST, \(\vec{B}\leftarrow\vec{B}_{\text{F}},\;\vec{C}\leftarrow\vec{C}_{F}\)\par
      \textbf{else}\par
      \quad \textit{tag}$\leftarrow$\,LAST, \(\vec{B}\leftarrow\vec{B}_{L},\;\vec{C}\leftarrow\vec{C}_{L}\)
\OUTPUT \textit{tag}, basis matrix \(\vec{B}\), coefficient matrix \(\vec{C}\)
\end{algorithmic}
\end{algorithm}

\begin{algorithm}[ht]
\caption{BD Reconstruction (Row)}
\label{alg:bd_reconstruct}
\begin{algorithmic}[1]
\INPUT
    \textit{tag}, basis matrix \(\vec{B}\in\mathbb{R}^{r\times n}\), coefficient matrix \(\vec{C}\in\mathbb{R}^{(m-r)\times r}\)

\STATE \textbf{if} \textit{tag} $=$ FIRST \textbf{then}
      \COMMENT{row-first in Equation \ref{equ:bd_identity}}
      \[
          \vec{W}\;\leftarrow\;
          \begin{bmatrix}
              \vec{B}\\
              \vec{C}\,\vec{B}
          \end{bmatrix}
      \]
\STATE \textbf{else} \COMMENT{row-last in Equation \ref{equ:bd_identity}}
      \[
          \vec{W}\;\leftarrow\;
          \begin{bmatrix}
              \vec{C}\,\vec{B}\\
              \vec{B}
          \end{bmatrix}
      \]
\OUTPUT Reconstructed matrix \(\vec{W}\in\mathbb{R}^{m\times n}\)
\end{algorithmic}
\end{algorithm}

\begin{algorithm}[h]
\caption{BD Attention Preparation (QK).
$R$ denotes the residual error of basis decomposition.}
\label{alg:mha_bd_prepare}
\begin{algorithmic}[1]
\INPUT \(\vec{W}_q, \vec{W}_k, \vec{W}_v, \vec{W}_o\) represent query, key, value and output projection matrix; $n$ be the number of attention heads
\FOR{\(i = 1, \ldots, n\)}

\STATE \textit{(first-$r$)}\;
\(R_{F}^i, \vec{B}_{F}^i, \vec{C}_{F}^i
\gets \mathrm{BD}^{\textsc{First}}_{\textsc{col}}(\vec{W}_q^i {\vec{W}_k^i}^\top)\)

\STATE \textit{(last-$r$)}\;
\(R_{L}^i, \vec{B}_{L}^i, \vec{C}_{L}^i
\gets \mathrm{BD}^{\textsc{Last}}_{\textsc{col}}(\vec{W}_q^i {\vec{W}_k^i}^\top)\)

\ENDFOR
\STATE Compute mean residuals: \(\bar{R}_{F} \gets \frac{1}{n}\sum_{i=1}^n R_{F}^i, \; \bar{R}_{L} \gets \frac{1}{n}\sum_{i=1}^n R_{L}^i\)
\STATE \textbf{Select better candidate}\par
      \textbf{if} \(\bar{R}_{F} \le \bar{R}_{L}\) \textbf{then}

      \quad \textit{tag}$\leftarrow$\,\textsc{First}, \(\vec{B}_{qk}^i \leftarrow \vec{B}_{F}^i,\;\vec{C}_{qk}^i \leftarrow \vec{C}_{F}^i, \; i = 1, \ldots, n\)

      \textbf{else}

      \quad \textit{tag}$\leftarrow$\,\textsc{Last}, \(\vec{B}_{qk}^i \leftarrow \vec{B}_{L}^i,\;\vec{C}_{qk}^i \leftarrow \vec{C}_{L}^i, \; i = 1, \ldots, n\)

\OUTPUT \textit{tag}, column basis matrices \(\vec{B}_{qk}^i\) (replacement of \(\vec{W}_q^i\)), coefficient matrix \(\vec{C}_{qk}^i\) (replacement of \(\vec{W}_k^i\)), \(i = 1, \ldots, n\)
\end{algorithmic}
\end{algorithm}

\begin{figure}[ht]
    \centering
    \includegraphics[width=0.8\columnwidth]{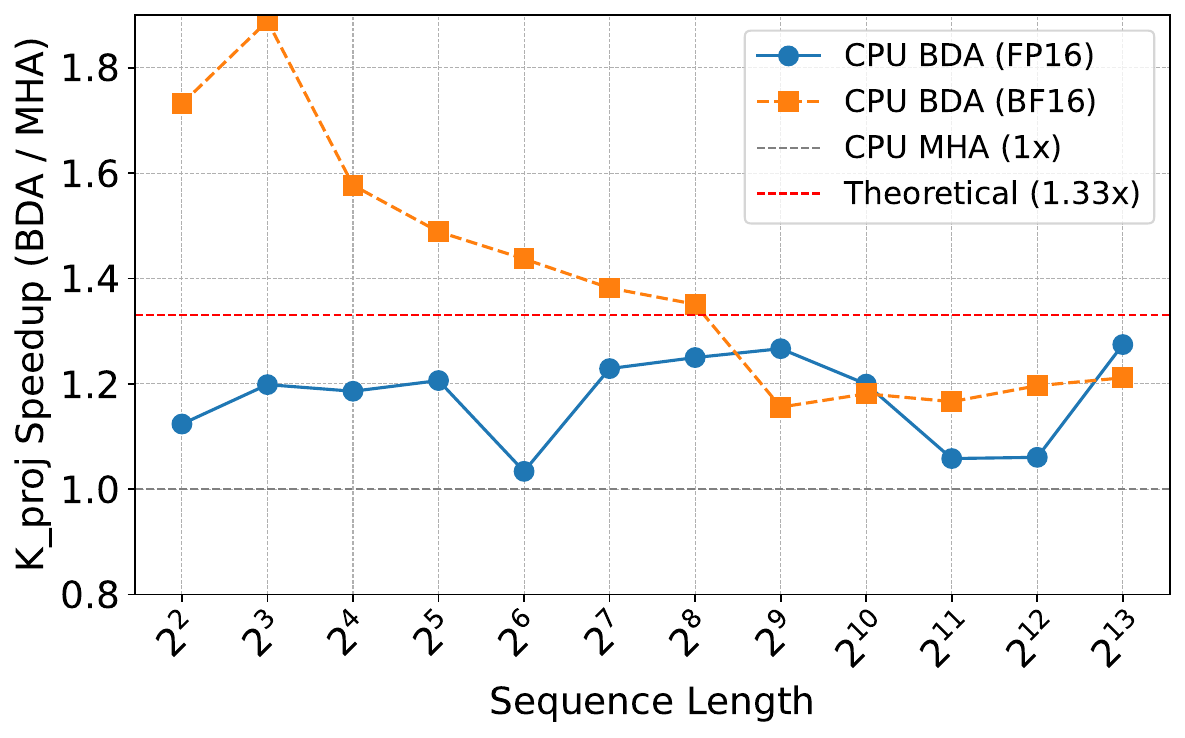}
    \caption{\textbf{BD Attention (Standard PyTorch Implementation) CPU Efficiency.}}
    \label{fig:bd_eval_cpu}
\end{figure}

\section{Proof of Almost Sure Full Rank of Random Matrices Theorem}
\label{sec:proof_full_rank_random}

\fullRankRandomTheorem*

\begin{proof}

\medskip
\noindent
\textbf{Step 1. Define the polynomial map $p(\vec{W}) = \det(\vec{W})$.}\\
Let
\[
p : \mathbb{R}^{r^2} \;\to\; \mathbb{R}, \quad
p(\vec{W}) \;=\; \det(\vec{W}),
\]
Note that $p$ is a polynomial (in $r^2$ variables)
and it is not the zero polynomial, since, for instance,
\[
p\bigl(\vec{I}_r\bigr) \;=\; \det(\vec{I}_r) \;=\; 1,
\]
where $\vec{I}_r$ denotes the $r\times r$ identity matrix.

\medskip
\noindent
\textbf{Step 2. The zero set of a nontrivial polynomial has measure zero.}\\
Consider
\[
Z \;=\; \{\, x \in \mathbb{R}^{r^2} \mid p(x) = 0 \}.
\]
Since $p$ is a nontrivial real polynomial, its zero set $Z$ is a real
algebraic variety of dimension $< r^2$, and hence we know
\[
\lambda(Z) \;=\; 0.
\]
(In simple terms, $Z$ is a ``lower-dimensional'' subset of $\mathbb{R}^{r^2}$.)

\medskip
\noindent
\textbf{Step 3. Absolute continuity of $\mu$ and conclusion.}\\
By hypothesis, the measure $\mu$ (the distribution of $\vec{W}$) is absolutely
continuous with respect to the Lebesgue measure $\lambda$, i.e.\ $\mu \ll \lambda$.
Hence for any Lebesgue null set $A$, we have $\mu(A) = 0$.
In particular, $Z$ has $\lambda(Z)=0$, so $\mu(Z)=0$.
But $Z$ exactly corresponds to
the event $\{\det(\vec{W})=0\}$ in the space of all matrix entries.
Therefore
\[
\Pr(\det(\vec{W})=0) \;=\; \mu(Z) \;=\; 0,
\]
which implies $\Pr(\det(\vec{W})\neq 0) = 1$, i.e.\ $\Pr(\mathrm{rank}(\vec{W})=r)=1$.
\end{proof}

\section{Relation to Interpolative Decomposition}
\label{sec:bd_vs_id}

BD is algebraically a special case of interpolative decomposition (ID) and CUR skeleton factorization (Equation~\ref{equ:bd_identity})~\citep{voronin2017efficient}. Its specialization to attention exploits three properties: (i) per-head $\vec{W}_q\vec{W}_k^\top$ and $\vec{W}_v\vec{W}_o$ are exactly rank-$d_h$ by construction, so BD reconstructs them with zero error—unlike ID-based weight compression~\citep{chee2022model} where targets are only approximately low-rank; (ii) the stochasticity of training induces a continuous distribution over the resulting weights, so Theorem~\ref{thm:full-rank-random} ensures any $d_h$ rows or columns form a valid basis with probability 1, letting BD avoid pivoted QR~\citep{businger1971linear}; (iii) BD selects \emph{contiguous} basis rows or columns as a hardware-friendly co-design, aligning with GPU memory-coalesced access and enabling a fused kernel that scattered pivot indices would prevent.

\section{BD for VO in MHA}
\label{sec:vo_mha}

Here, we introduce the BD transformation for value and output projection matrices. The right part of Equation \ref{equ:head_simple} can be converted to (using row-first in Equation \ref{equ:bd_identity}):

\begin{equation}
    \vec{V}_i \vec{W}_o^i = \vec{X} (\vec{W}_v^i \vec{W}_o^i) =  \vec{X} (\begin{bmatrix}\vec{I} \\ \vec{C}_{vo}^i\end{bmatrix} \vec{B}_{vo}^i) = ({\vec{X}_{:,\,1:d_h}} + \vec{X}_{:,\,d_h:d}\vec{C}_{vo}^i) \vec{B}_{vo}^i
\end{equation}

where \(\vec{B}_{vo}^i \in \mathbb{R}^{d_h \times d}\) be the first-$d_h$ row basis and \(\vec{C}_{vo}^i \in \mathbb{R}^{(d - d_h) \times d_h}\) the coefficient matrix. Similar to Equation \ref{equ:qk_bd_merge}, we can redefine value and output projection matrices to avoid calculating each head separately:

\begin{equation}
\begin{aligned}
    &\vec{V}^{\prime} = {[\vec{X}_{:,\,1:d_h}]}^{\times n} + \vec{X}_{:,\,d_h:d}\vec{C}_{vo} , \quad \text{where} \; \vec{C}_{vo} := [\vec{C}_{vo}^1, \dots, \vec{C}_{vo}^n] \\
    &[\vec{V}_1^{\prime}, \dots, \vec{V}_n^{\prime}] = \vec{V}^{\prime} \\
    &\vec{O}_i^{\prime} = \text{softmax}(\frac{\vec{Q}_i^{\prime} {\vec{K}_i^{\prime}}^\top}{\sqrt{d_h}})\vec{V}_i^{\prime} \\
    &\vec{Y} = [\vec{O}_1^{\prime}, \dots, \vec{O}_n^{\prime}] \vec{B}_{vo} , \quad \text{where} \; \vec{B}_{vo} := \begin{bmatrix}
    \vec{B}_{vo}^1 \\
    \vdots \\
    \vec{B}_{vo}^n
    \end{bmatrix}
\end{aligned}
\end{equation}

where \({[\vec{X}_{:,\,1:d_h}]}^{\times n}\) means repeat matrix $n$ times along second dimension. The \textit{last} version is similar.

\section{Hyperparameters for BDA Training}
\label{sec:train_hyper}

For the IWSLT’14 English-to-German task, we followed the standard Transformer setup with the \textit{Noam} learning-rate schedule~\citep{vaswani2017attention}.
The embedding dimension was set to 512, with a feed-forward dimension of 2048.
We used a batch size of 10,240 tokens and trained for 20,000 steps.
Dropout was 0.1 and label smoothing was 0.1.
The model contained 6 layers with 4 attention heads.
For simplicity, we removed positional embeddings inside the MHA module but retained them in the embedding layer (the effect of positional embeddings on BD is discussed in Appendix~\ref{sec:pos_enc}).
The learning rate scale was varied in $\{0.5, 1, 2, 4\}$ (Section~\ref{sec:exp_train}).
Training employed the Adam optimizer with 6,000 warmup steps.
Beam search with a beam size of 2 was used for evaluation, and BLEU scores were reported using the checkpoint with the lowest validation perplexity.

All hyperparameters for BDA were kept \textbf{identical} to those of MHA to ensure a fair comparison.
Moreover, this also shows that BDA naturally matches the training dynamics of MHA, achieving comparable performance without any hyperparameter tuning.
As a result, existing LLM training pipelines can migrate from MHA to BDA at essentially no cost.

\section{Effect of Positional Embedding on BD}
\label{sec:pos_enc}

We briefly discuss how positional embeddings interact with Basis Decomposition (BD).

\paragraph{Embedding-layer positional embedding.}
Any positional embedding applied at the embedding layer (e.g., learned positional embeddings or sinusoidal embeddings added to input tokens) does not affect BD. Since BD only restructures the projection matrices inside the attention, the addition of position-dependent vectors at the input embedding layer is orthogonal to BD’s reformulation.

\paragraph{MHA-internal positional embedding.}
When positional embeddings are applied inside the multi-head attention (MHA) module, the formulation changes. For example, vanilla Rotary Position Embedding (RoPE) \citep{su2021roformer} modifies the attention score computation as
\[
\vec{X}_n \vec{W}_q \, \vec{W}_k^\top  \vec{X}_m^\top
\quad \longrightarrow \quad
\vec{X}_n \vec{W}_q \, \vec{R}_{n-m} \, \vec{W}_k^\top  \vec{X}_m^\top
\]
where $\vec{R}_{n-m}$ is a rotation matrix depending on relative positions $n-m$.
BD guarantees the exact factorization
\[
\vec{W}_q \vec{W}_k^\top = \vec{B} [\vec{I}, \vec{C}],
\]
but in general BD cannot guarantee
\[
\vec{W}_q \vec{R}_{n-m} \vec{W}_k^\top = \vec{B} \vec{R}_{n-m} [\vec{I}, \vec{C}].
\]
Thus, vanilla RoPE breaks the exactness of BD.

\paragraph{Decoupled RoPE as a solution.}
DeepSeek proposes \textit{Decoupled RoPE} \citep{liu2024deepseekv2}, which splits attention channels into RoPE and non-RoPE parts. BD can then be applied to the non-RoPE channels, while RoPE channels remain unchanged. This is also the strategy used in our experiments.

\paragraph{Model-specific implications.}
\begin{itemize}
  \item \textbf{GPT models} use positional embeddings only at the input embedding layer; thus BD is fully lossless for both QK and VO.
  \item \textbf{LLaMA models} adopt vanilla RoPE inside MHA. Since RoPE is applied only to QK, BD remains lossless for VO projections but is not exact for QK.
  \item \textbf{DeepSeek models} employ Decoupled RoPE, which separates RoPE and non-RoPE channels in QK. BD can be applied losslessly to the non-RoPE channels of QK, and to all VO projections.
\end{itemize}

In summary, the compatibility of BD with positional embeddings depends on how positional information is integrated. Embedding-layer positional encodings pose no issue, while RoPE inside MHA requires modifications (e.g., Decoupled RoPE).

% \section{LLM Usage}
% Large Language Models (LLMs) were used solely as a writing assistant to polish grammar.
% They were not involved in research ideation, experiment design, analysis, or drafting of scientific content.

\begin{table}[ht]
\centering
\caption{\textbf{Numerical reconstruction errors of BD} for $\vec{W}_q \vec{W}_k^\top$ (QK) and $\vec{W}_v \vec{W}_o$ (VO) under different floating-point formats.
Values are averaged across all heads and all layers of the DeepSeek-V2-Lite model.
We compare two strategies for selecting the BD basis: (i) always using the first $r$ rows (\textit{First-$r$}), and (ii) choosing between the first or last $r$ rows depending on which yields the smaller residual (\textit{Residual-min}).
We report absolute mean squared error (MSE) and normalized mean squared error (NMSE).
The results confirm that BD introduces only \textbf{negligible} perturbations to the matrix products, with \textit{Residual-min} consistently outperforming \textit{First-$r$}, and improving error by at least one order of magnitude in FP32.}
\label{tab:bd_reconstruct_error}
\begin{tabular}{llccc}
\toprule
 & & FP32 & FP16 & BF16 \\
\midrule
\multirow{2}{*}{QK MSE}  & First-$r$ & $3.19\times10^{-12}$ & $1.09\times10^{-7}$ & $7.42\times10^{-7}$ \\
                         & Residual-min        & $3.12\times10^{-13}$ & $7.51\times10^{-8}$ & $6.51\times10^{-7}$ \\
\cmidrule(lr){1-5}
\multirow{2}{*}{QK NMSE} & First-$r$ & $5.74\times10^{-9}$  & $3.20\times10^{-4}$ & $2.07\times10^{-3}$ \\
                         & Residual-min        & $7.10\times10^{-10}$ & $2.36\times10^{-4}$ & $1.88\times10^{-3}$ \\
\cmidrule(lr){1-5}
\multirow{2}{*}{VO MSE}  & First-$r$ & $2.45\times10^{-12}$ & $1.02\times10^{-8}$ & $8.91\times10^{-8}$ \\
                         & Residual-min        & $2.15\times10^{-14}$ & $5.97\times10^{-9}$ & $8.19\times10^{-8}$ \\
\cmidrule(lr){1-5}
\multirow{2}{*}{VO NMSE} & First-$r$ & $1.26\times10^{-7}$  & $2.71\times10^{-4}$ & $2.28\times10^{-3}$ \\
                         & Residual-min        & $8.31\times10^{-10}$ & $1.61\times10^{-4}$ & $2.06\times10^{-3}$ \\
\bottomrule
\end{tabular}
\end{table}

\begin{table}[ht]
\centering
\caption{\textbf{Perplexity of BD Attention on WikiText2}.
All MHA layers in the DeepSeek-V2-Lite model are replaced with BD Attention, and we report perplexity ($\downarrow$) under different floating-point formats.
For BD, we show results using (i) always the first $r$ rows (\textit{First-$r$}) and (ii) selecting between the first or last $r$ rows based on the smaller residual (\textit{Residual-min}).
The last row reports the \textit{relative increase in perplexity (PPL)}, computed as $(\mathrm{PPL}_{\text{BD}} - \mathrm{PPL}_{\text{Original}})/\mathrm{PPL}_{\text{Original}}$.
Across all settings, BD introduces only \textbf{negligible} changes in model performance, with \textit{Residual-min} consistently yielding smaller increases.
Preparation time is very short (a few seconds), making BD efficient for deployment.}

\begin{tabular}{llccc}
\toprule
 &  & FP32 & FP16 & BF16 \\
\midrule
Original PPL &  & $6.306983$ & $6.307075$ & $6.310289$ \\
\cmidrule(lr){1-5}
\multirow{2}{*}{BD PPL}
  & First-$r$     & $6.307082$ & $6.309186$ & $6.326459$ \\
  & Residual-min  & $6.307007$ & $6.308252$ & $6.325656$ \\
\cmidrule(lr){1-5}
\multirow{2}{*}{PPL Increase (relative)}
  & First-$r$     & $0.002\%$ & $0.033\%$ & $0.256\%$ \\
  & Residual-min  & $0.0004\%$ & $0.019\%$ & $0.244\%$ \\
\cmidrule(lr){1-5}
\multirow{2}{*}{Preparation Time (s)}
  & First-$r$     & 3.56 & 1.89 & 2.39 \\
  & Residual-min  & 6.09 & 4.05 & 4.05 \\
\bottomrule
\end{tabular}
\end{table}

\section{Zero-shot Evaluation}
\label{app:zeroshot}

To provide a more comprehensive evaluation,
we additionally report zero-shot downstream performance
to complement the perplexity results in the main paper
and to assess whether the lossless property of BD
is preserved in downstream tasks.

\paragraph{Experimental Setup.}
We evaluate zero-shot performance on \textbf{DeepSeek-V2-Lite},
comparing the original MHA model (FP16) with its BD-based counterpart
(BDA, FP16).
All other settings are kept identical between MHA and BDA.
We follow the standard zero-shot evaluation protocol and report
mean accuracy and standard error, including ARC Easy and Challenge \citep{Clark2018ThinkYH}, SuperGLUE \citep{NEURIPS2019_4496bf24}, and HellaSwag \citep{zellers2019hellaswag}. Evaluations are conducted using the LM Evaluation Harness framework \citep{eval-harness}.

\begin{table}[ht]
\centering
\small
\caption{\textbf{Zero-shot accuracy} (mean $\pm$ std) on DeepSeek-V2-Lite.
Performance differences between MHA and BDA are consistently within
one standard error across all tasks.}
\label{tab:zeroshot}
\begin{tabular}{lcc}
\toprule
Task & MHA (FP16) & BDA (FP16) \\
\midrule
BoolQ          & 0.8021 $\pm$ 0.0070 & 0.8024 $\pm$ 0.0070 \\
MultiRC        & 0.5598 $\pm$ 0.0071 & 0.5611 $\pm$ 0.0071 \\
RTE            & 0.6065 $\pm$ 0.0294 & 0.6101 $\pm$ 0.0294 \\
WIC            & 0.4984 $\pm$ 0.0198 & 0.4984 $\pm$ 0.0198 \\
WSC            & 0.3654 $\pm$ 0.0474 & 0.3654 $\pm$ 0.0474 \\
HellaSwag      & 0.7779 $\pm$ 0.0041 & 0.7785 $\pm$ 0.0041 \\
ARC-Easy       & 0.7433 $\pm$ 0.0090 & 0.7412 $\pm$ 0.0090 \\
ARC-Challenge  & 0.4590 $\pm$ 0.0146 & 0.4599 $\pm$ 0.0146 \\
\bottomrule
\end{tabular}
\end{table}

\paragraph{Discussion.}
Across all evaluated tasks, the performance differences between MHA and
BDA remain well within one standard error.
This behavior is consistent with the theoretical guarantee that BD is a
lossless decomposition, as well as with the near-identical FP16
perplexity observed in language modeling.
These results confirm that BD preserves downstream zero-shot performance
in practice.

\begin{table}[t]
\centering
\caption{\textbf{Throughput comparison on FP16, NVIDIA A6000}.
We report throughput in million tokens per second (higher is better) for a single attention operator, comparing MHA, PIFA-style Attention, and BDA across different sequence lengths.
The tested matrix shape follows the DeepSeek-V3 configuration~\citep{liu2024deepseek}, with $n = 128$ heads, $d = 512$ (corresponding to $d_c$ in DeepSeek-V3), and $d_h = 128$ (same as $d_h$ in DeepSeek-V3).
The last column reports the \textit{relative speedup}, defined as BDA throughput divided by MHA throughput.}
\label{tab:fp16_A6000_throughput}
\begin{tabular}{rcccc}
\toprule
Seq. Len & MHA & PIFA-style (per-head QR) & BDA & Speedup \\
\midrule
64     & 1.79 & 0.99 & 2.16 & 1.21$\times$ \\
128    & 3.13 & 1.30 & 3.79 & 1.21$\times$ \\
256    & 4.46 & 1.52 & 5.43 & 1.22$\times$ \\
512    & 4.95 & 1.51 & 7.04 & 1.42$\times$ \\
1024   & 5.62 & 1.69 & 7.87 & 1.40$\times$ \\
2048   & 5.95 & 1.72 & 8.03 & 1.35$\times$ \\
4096   & 5.59 & 1.72 & 7.71 & 1.38$\times$ \\
8192   & 5.58 & 1.74 & 7.47 & 1.34$\times$ \\
16384  & 5.51 & 1.74 & 7.31 & 1.33$\times$ \\
32768  & 5.43 & 1.74 & 7.17 & 1.32$\times$ \\
65536  & 5.41 & 1.72 & 7.06 & 1.30$\times$ \\
\bottomrule
\end{tabular}
\end{table}

\begin{table}[t]
\centering
\caption{\textbf{Throughput comparison on BF16, NVIDIA A6000}. Setup and notation are the same as Table~\ref{tab:fp16_A6000_throughput}.}
\label{tab:bf16_A6000_throughput}
\begin{tabular}{rcccc}
\toprule
Seq. Len & MHA & PIFA-style (per-head QR) & BDA & Speedup \\
\midrule
64     & 1.74 & 0.98 & 2.16 & 1.24$\times$ \\
128    & 3.13 & 1.26 & 3.79 & 1.21$\times$ \\
256    & 4.46 & 1.52 & 5.56 & 1.24$\times$ \\
512    & 4.95 & 1.50 & 7.14 & 1.44$\times$ \\
1024   & 5.62 & 1.72 & 8.06 & 1.44$\times$ \\
2048   & 5.62 & 1.72 & 8.13 & 1.45$\times$ \\
4096   & 5.59 & 1.73 & 7.89 & 1.41$\times$ \\
8192   & 5.61 & 1.74 & 7.60 & 1.35$\times$ \\
16384  & 5.52 & 1.74 & 7.34 & 1.33$\times$ \\
32768  & 5.50 & 1.74 & 7.34 & 1.34$\times$ \\
65536  & 5.51 & 1.72 & 7.22 & 1.31$\times$ \\
\bottomrule
\end{tabular}
\end{table}

\section{Residual-Min with Offset Search}
\label{sec:offset_search}

Theorem~\ref{thm:full-rank-random} guarantees that any $d_h$ rows or columns form a valid basis with probability 1, but in finite precision the resulting $d_h\times d_h$ sub-blocks differ in conditioning: ill-conditioned blocks amplify rounding error during basis inversion. The Residual-min strategy in the main paper mitigates this by selecting between the first-$d_h$ and last-$d_h$ candidates based on Frobenius reconstruction residual.

We optionally extend this search to contiguous windows of size $d_h$ whose starting offsets are multiples of 16---a constraint that preserves the memory-coalesced access required for the fused kernel---and select the window with the smallest Frobenius residual. We refer to this variant as \textbf{Residual-min (offset search)}.

\paragraph{End-to-end perplexity.}
On DeepSeek-V2-Lite, replacing all MHA layers with BDA using Residual-min (offset search) reduces the FP16 PPL increase relative to the original model from $2\times10^{-4}$ (Residual-min between $\{$first, last$\}$) to $2\times10^{-5}$, an order-of-magnitude improvement.

\paragraph{Operator throughput cost.}
At the operator level (DeepSeek-V3 attention shape, FP16, A6000), Residual-min (offset search) averages $1.27\times$ MHA throughput versus $1.31\times$ for Residual-min $\{$first, last$\}$, i.e.\ a ${\sim}3\%$ throughput cost relative to the standard variant.

%%%%%%%%%%%%%%%%%%%%%%%%%%%%%%%%%%%%%%%%%%%%%%%%%%%%%%%%%%%%%%%%%%%%%%%%%%%%%%%
%%%%%%%%%%%%%%%%%%%%%%%%%%%%%%%%%%%%%%%%%%%%%%%%%%%%%%%%%%%%%%%%%%%%%%%%%%%%%%%

% The NeurIPS paper checklist is only required for the submission version.
% It is not included in the preprint version.
% \clearpage
% \input{checklist.tex}

\end{document}